\newcommand\independent{\protect\mathpalette{\protect\independenT}{\perp}}
\def\independenT#1#2{\mathrel{\rlap{$#1#2$}\mkern2mu{#1#2}}}
\newtheorem{definition}{Definition}
\newtheorem{lemma}{Lemma}
\newtheorem{theorem}{Theorem}
\newtheorem{claim}{Claim}
\newcommand{\widesim}[2][1.5]{
  \mathrel{\overset{#2}{\scalebox{#1}[1]{$\sim$}}}}
\newcommand*{\LargerCdot}{\raisebox{-0.25ex}{\scalebox{1.5}{$\cdot$}}}
\title{A Kernel Test for Three-Variable Interactions with Random Processes}
\author{ {\bf Paul K. Rubenstein$^{123}$}, \quad {\bf Kacper P. Chwialkowski$^3$}, \quad {\bf Arthur Gretton$^4$}  \\
$^1$Machine Learning Group, University of Cambridge\\
$^2$Empirical Inference, MPI for Intelligent Systems, Tübingen, Germany\\
$^3$Department of Computer Science, University College London\\
$^4$Gatsby Computational Neuroscience Unit, University College London\\
\texttt{pkr23@cam.ac.uk, kacper.chwialkowski@gmail.com, arthur.gretton@gmail.com}
}
\begin{document}
	\maketitle
\begin{abstract} 

We apply a wild bootstrap method to the Lancaster three-variable interaction measure in order to detect factorisation of the joint distribution on three variables forming a stationary random process, for which the existing permutation bootstrap method fails. As in the \emph{i.i.d.}~case, the Lancaster test is found to outperform existing tests in cases for which two independent variables individually have a weak influence on a third, but that when considered jointly the influence is strong. The main contributions of this paper are twofold: first, we prove that the Lancaster statistic satisfies the conditions required to estimate the quantiles of the null distribution using the wild bootstrap; second, the manner in which this is proved is novel, simpler than existing methods, and can further be applied to other statistics.

\end{abstract} 

\section{INTRODUCTION}\label{section:intro}
\label{introduction}
Nonparametric testing of independence or interaction between random variables is a core staple of machine learning and statistics. The majority of nonparametric statistical tests of independence for continuous-valued random variables   rely on the assumption that the observed data are drawn \emph{i.i.d.} \cite{Feuerverger93,gretton2007kernel,Szekely2007,GreGyo10,HelHelGor13}. The same assumption applies to tests of conditional dependence, and of multivariate interaction between variables \cite{Zhang2011,KanUsh98,FukGreSunSch08,sejdinovic2013kernel,PatSenSze15}.
For many applications in finance, medicine, and audio signal analysis, however, the \emph{i.i.d.}~assumption is unrealistic and overly restrictive.
While many approaches exist for testing interactions between time series under strong parametric assumptions 
\cite{kirchgassner2012introduction,ledford1996statistics}, the problem of testing for general, nonlinear interactions has seen far less analysis: tests of pairwise dependence have been proposed by \cite{GaiRupSch10,besserve_statistical_2013,chwialkowski2014wild, chwialkowski2014kernel}, where the first publication also addresses mutual independence of more than two univariate time series. The two final works use as their statistic the Hilbert-Schmidt Indepenence Criterion, a general nonparametric measure of dependence \citep{gretton2005measuring}, which applies even for multivariate or non-Euclidean variables (such as strings and groups).
The asymptotic behaviour and corresponding test threshold are derived using  particular assumptions on the mixing properties of the processes from which the observations are drawn. These kernel approaches apply only to pairs of random processes, however.

The Lancaster interaction is a signed measure that can be used to construct a test statistic capable of detecting dependence between three random variables \citep{lancaster1969chi,sejdinovic2013kernel}. If the joint distribution on the three variables factorises in some way into a product of a marginal and a pairwise marginal, the Lancaster interaction is zero everywhere. Given observations, this can be used to construct a statistical test, the null hypothesis of which is that the joint distribution factorises thus. 
In the \emph{i.i.d.}~case, the null distribution of the test statistic can be estimated using a permutation bootstrap technique: this amounts to shuffling the indices of one or more of the variables and recalculating the test statistic on this bootstrapped data set. When our samples instead exhibit temporal dependence, shuffling the time indices destroys this dependence and thus doing so does not correspond to a valid resample of the test statistic. 

Provided that our data-generating process satisfies some technical conditions on the forms of temporal dependence, recent work by \citet{leucht2013dependent}, building on the work of \citet{shao2010dependent}, can come to our rescue. The wild bootstrap is a method that correctly resamples from the null distribution of a test statistic, subject to certain conditions on both the test statistic and the processes from which the observations have been drawn.

In this paper we show that the Lancaster interaction test statistic satisfies the conditions required to apply the wild bootstrap procedure; moreover, the manner in which we prove this is significantly simpler than existing proofs in the literature of the same property for other kernel test statistics \citep{chwialkowski2014wild,chwialkowski2014kernel}. Previous proofs have relied on the classical theory of $V$-statistics to analyse the asymptotic distribution of the kernel statistic. In particular, the Hoeffding decomposition gives an expression for the kernel test statistic as a sum of other $V$-statistics. Understanding the asymptotic properties of the components of this decomposition is then conceptually tractable, but algebraically extremely painful. Moreover, as the complexity of the test statistic under analysis grows, the number of terms that must be considered in this approach grows factorially.\footnote{See for example Lemma 8 in Supplementary material A.3 of \citet{chwialkowski2014kernel}. The proof of this lemma requires keeping track of $4!$ terms; an equivalent approach for the Lancaster test would have $6!$ terms. Depending on the precise structure of the statistic, this approach applied to a test involving 4 variables could require as many as $8!=40320$ terms.} We conjecture that such analysis of interaction statistics of 4 or more variables would in practice be unfeasible without automatic theorem provers due to the sheer number of terms in the resulting computations.

In contrast, in the approach taken in this paper we explicitly consider our test statistic to be the norm of a Hilbert space operator. We exploit a Central Limit Theorem for Hilbert space valued random variables \cite{dehling2015bootstrap} to show that our test statistic converges in probability to the norm of a related population-centred Hilbert space operator, for which the asymptotic analysis is much simpler. Our approach is novel; previous analyses have not, to our knowledge, leveraged the Hilbert space geometry in the context of statistical hypothesis testing using kernel $V$-statistics in this way. 

We propose that our method may in future be applied to the asymptotic analysis of other kernel statistics. In the appendix, we provide an application of this method to the Hilbert Schmidt Independence Criterion (HSIC) test statistic, giving a significantly shorter and simpler proof than that given in \citet{chwialkowski2014kernel}

The Central Limit Theorem that we use in this paper makes certain assumptions on the mixing properties of the random processes from which our data are drawn; as further progress is made, this may be substituted for more up-to-date theorems that make weaker mixing assumptions.

\paragraph{OUTLINE:}
In Section \ref{section:main}, we detail the Lancaster interaction test and provide our main results. These results justify use of the wild bootstrap to understand the null distribution of the test statistic. In Section \ref{section:details}, we provide more detail about the wild bootstrap, prove that its use correctly controls Type I error and give a consistency result. In Section \ref{section:experiments}, we evaluate the Lancaster test on synthetic data to identify cases in which it outperforms existing methods, as well as cases in which it is outperformed. In Section \ref{section:proofs}, we provide proofs of the main results of this paper, in particular the aforementioned novel proof. Further proofs may be found in the Supplementary material.

\section{LANCASTER INTERACTION TEST}\label{section:main}

\subsection{KERNEL NOTATION}

Throughout this paper we  will assume that the kernels $k,l,m$, defined on the domains $\mathcal{X}$, $\mathcal{Y}$ and $\mathcal{Z}$ respectively, are characteristic  \citep{sriperumbudur2011universality}, bounded and Lipschitz continuous. We describe some notation relevant to the kernel $k$; similar notation holds for $l$ and $m$. Recall that $\mu_X := \mathbb{E}_X k(X,\cdot) \in \mathcal{F}_k$ is the  mean embedding \citep{smola2007hilbert} of the random variable $X$. Given observations $X_i$, an estimate of the mean embedding is $\tilde{\mu}_X = \frac{1}{n}\sum_{i=1}^n k(X_i,\cdot)$. Two modifications  of $k$ are used in this work:
\begin{align}
\bar{k}(x,x') &= \langle  k(x,\cdot)-\mu_X,k(x',\cdot)-\mu_X\rangle, \\
\tilde{k}(x,x') &= \langle k(x,\cdot)-\tilde{\mu}_X, k(x',\cdot)-\tilde{\mu}_X \rangle 
\end{align}
These are called the \emph{population centered kernel} and \emph{empirically centered kernel} respectively. 

\subsection{LANCASTER INTERACTION}

The Lancaster interaction on the triple of random variables $(X,Y,Z)$ is defined as the signed measure $\Delta_LP = \mathbb{P}_{XYZ} - \mathbb{P}_{XY}\mathbb{P}_{Z} - \mathbb{P}_{XZ}\mathbb{P}_{Y} - \mathbb{P}_{X}\mathbb{P}_{YZ} + 2\mathbb{P}_{X}\mathbb{P}_{Y}\mathbb{P}_{Z}$. This measure can be used to detect three-variable interactions. It is straightforward to show that if any variable is independent of the other two (equivalently, if the joint distribution $\mathbb{P}_{XYZ}$ factorises into a product of marginals in any way), then $\Delta_LP = 0$. That is, writing $\mathcal{H}_X = \{X \independent (Y,Z)\}$ and similar for $\mathcal{H}_Y$ and $\mathcal{H}_Z$, we have that

\begin{equation}\label{eqn:lancaster-zero}
\mathcal{H}_X \enspace \lor \enspace \mathcal{H}_Y \enspace \lor \enspace \mathcal{H}_Z \Rightarrow \Delta_LP=0
\end{equation}

The reverse implication does not hold, and thus no conclusion about the veracity of the $\mathcal{H}_{\LargerCdot}$ can be drawn when $\Delta_LP=0$. Following \citet{sejdinovic2013kernel}, we can consider the mean embedding of this measure:
\begin{align}
 \mu_L = \int k(x,\cdot) l(y,\cdot) m(z,\cdot) \Delta_LP 
\end{align}

 Given an \emph{i.i.d.}~sample $(X_i,Y_i,Z_i)_{i=1}^n$, the norm of the mean embedding $\mu_L$ can be empirically estimated using empirically centered kernel matrices. For example, for the kernel $k$ with kernel matrix $K_{ij} = k(X_i,X_j)$, the empirically centered kernel matrix $\tilde{K}$ is given by
\[
\tilde{K}_{ij} = \langle k(X_i,\cdot)-\tilde{\mu}_X, k(X_j,\cdot) -\tilde{\mu}_X \rangle,
\]
 By \citet{sejdinovic2013kernel}, an estimator of the norm of the mean embedding of the Lancaster interaction for \emph{i.i.d.}~samples is 
\begin{equation}\label{eqn:lancaster}
\|\hat \mu_L\|^2 = \frac{1}{n^2}\left(\tilde{K}\circ\tilde{L}\circ\tilde{M}\right)_{++}
\end{equation}
where $\circ$ is the Hadamard (element-wise) product and $A_{++} = \sum_{ij}A_{ij}$, for a matrix $A$.

\subsection{TESTING PROCEDURE}

In this paper, we construct a statistical test for three-variable interaction, using $n\|\hat \mu_L\|^2$ as the test statistic to distinguish between the following hypotheses:

$\mathcal{H}_0: \mathcal{H}_X \enspace \lor \enspace \mathcal{H}_Y \enspace \lor \enspace \mathcal{H}_Z $\\
$\mathcal{H}_1: \mathbb{P}_{XYZ}$ does not factorise in any way

The null hypothesis $\mathcal{H}_0$ is a composite of the three `sub-hypotheses' $\mathcal{H}_X$, $\mathcal{H}_Y$ and $\mathcal{H}_Z$. We test $\mathcal{H}_0$ by testing each of the sub-hypotheses separately and we reject if and only if we reject each of $\mathcal{H}_X$, $\mathcal{H}_Y$ and $\mathcal{H}_Z$. Hereafter we describe the procedure for testing $\mathcal{H}_Z$; similar results hold for $\mathcal{H}_X$ and $\mathcal{H}_Y$.

\citet{sejdinovic2013kernel} show that, under $\mathcal{H}_Z$, $n \|\hat \mu_L\|^2 $ converges to an infinite sum of weighted $\chi$-squared random variables. By leveraging the \emph{i.i.d.}~assumption of the samples, any given quantile of this  distribution  can be estimated using simple permutation bootstrap, and so a test procedure is proposed.

In the time series setting this approach does not work. Temporal dependence within the samples makes  study of the asymptotic distribution of $n \|\hat \mu_L\|^2 $ difficult; in Section \ref{experiment2} we verify experimentally that the permutation bootstrap used in the \emph{i.i.d}~case fails. To construct a test in this setting we will use asymptotic and bootstrap results for mixing processes. 

Mixing formalises the notion of the temporal structure within a process, and can be thought of as the rate at which the process forgets about its past. For example, for Gaussian processes this rate can be captured by the autocorrelation function; for general processes, generalisations of autocorrelation are used. The exact assumptions we make about the mixing properties of processes in this paper are discussed in Section \ref{section:details}, and we will refer to them as \textit{suitable mixing assumptions} for brevity in statements of results throughout this paper.

\subsection{MAIN RESULTS}

It is straightforward to show that the norm of the mean embedding \eqref{eqn:lancaster} can also be written as

\[ \|\hat \mu_L\|^2 = \frac{1}{n^2}\left(\widetilde{\tilde{K}\circ\tilde{L}}\circ\tilde{M}\right)_{++}\]

Our first contribution is to show that the (difficult) study of the asymptotic null distribution of $ \|\hat \mu_L\|^2$ can be reduced to studying population centered kernels
\[
\| \hat \mu^{(Z)}_{L,2} \|^2 =\frac{1}{n^2}\left(\overline{\overline{K}\circ\overline{L}}\circ\overline{M}\right)_{++}
\]
where e.g. 
\[
\overline{K}_{ij} = \langle k(X_i,\cdot)-\mu_X, k(X_j,\cdot) -\mu_X \rangle,
\]
Specifically, we prove the following:
\begin{theorem}\label{theorem:norm-conv-in-prob} Suppose that $(X_i,Y_i,Z_i)_{i=1}^n$ are drawn from a random process satisfying suitable mixing assumptions. Under $\mathcal{H}_Z$, $\lim_{n \to \infty} ( n\| \hat \mu^{(Z)}_{L,2} \|^2 - n\|\hat \mu_L\|^2 ) =0 $ in probability.
\end{theorem}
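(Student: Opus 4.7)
The plan is to exploit the Hilbert space structure: both quantities are squared norms of elements in the tensor product RKHS $\mathcal{F}_k\otimes\mathcal{F}_l\otimes\mathcal{F}_m$. Concretely, unfolding the Hadamard product and row-sum identity shows
\[
\hat\mu_L = \frac{1}{n}\sum_{i=1}^n \bigl(\tilde\Phi_i - \tfrac{1}{n}\textstyle\sum_j \tilde\Phi_j\bigr)\otimes \tilde\chi_i, \qquad \hat\mu^{(Z)}_{L,2}= \frac{1}{n}\sum_{i=1}^n (\Phi_i - \mathbb{E}\Phi)\otimes\chi_i,
\]
where $\tilde\Phi_i = (k(X_i,\cdot)-\tilde\mu_X)\otimes(l(Y_i,\cdot)-\tilde\mu_Y)$, $\Phi_i$ is its population-centered analogue, and $\tilde\chi_i,\chi_i$ are the corresponding empirically/population centered embeddings of $Z_i$. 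Then the elementary inequality $|\|a\|^2-\|b\|^2|\le \|a-b\|(\|a\|+\|b\|)$ with $a=\sqrt{n}\,\hat\mu_L$ and $b=\sqrt{n}\,\hat\mu^{(Z)}_{L,2}$ reduces the theorem to proving two facts: (i) $\|b\|=O_p(1)$, and (ii) $\|a-b\|=o_p(1)$, i.e.\ $\sqrt n\,\|\hat\mu_L-\hat\mu^{(Z)}_{L,2}\|=o_p(1)$.

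For (i), under $\mathcal{H}_Z$ the variable $Z$ is independent of $(X,Y)$, hence the summands $(\Phi_i-\mathbb{E}\Phi)\otimes\chi_i$ form a centered, stationary, mixing sequence of Hilbert-space valued random elements. Applying the CLT of \citet{dehling2015bootstrap} in the tensor product RKHS gives $\sqrt n\,\hat\mu^{(Z)}_{L,2}\Rightarrow\mathcal{N}$ and, in particular, $\|b\|=O_p(1)$.

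For (ii), the idea is to expand $\tilde\Phi_i$ and $\tilde\chi_i$ by adding and subtracting the population means. Writing $\Delta_X=\tilde\mu_X-\mu_X$ (and similarly $\Delta_Y,\Delta_Z$), one has $k(X_i,\cdot)-\tilde\mu_X = (k(X_i,\cdot)-\mu_X) - \Delta_X$ and analogously for the other variables, so $\tilde\Phi_i$ and $\tilde\chi_i$ each differ from their population versions by a sum of terms in which at least one factor is a $\Delta_{\bullet}$. The same Hilbert-space CLT (or a mixing LLN) gives $\|\Delta_X\|,\|\Delta_Y\|,\|\Delta_Z\|=O_p(1/\sqrt n)$. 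Substituting these expansions into $\hat\mu_L-\hat\mu^{(Z)}_{L,2}$ and grouping terms, every contribution factors into a product of (a) a $\Delta$ term (hence $O_p(1/\sqrt n)$ in norm), (b) an averaged quantity that is $O_p(1)$ by a mixing LLN on bounded kernels, and possibly (c) further $\Delta$ factors. Triangle inequality then gives $\|\hat\mu_L-\hat\mu^{(Z)}_{L,2}\|=O_p(1/n)$, so multiplying by $\sqrt n$ yields the desired $o_p(1)$, and combining with (i) closes the argument since $\|a\|\le\|b\|+\|a-b\|=O_p(1)$.

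The principal obstacle is the bookkeeping in step (ii): there are many cross-terms created by centering three tensor factors empirically rather than against the population means, and one must verify that each survives only with at least one $\Delta_\bullet$ attached. The crucial input that makes this go through under temporal dependence rather than only under i.i.d.\ sampling is the mixing CLT for Hilbert space valued variables, which supplies both the $O_p(1/\sqrt n)$ bounds on $\Delta_X,\Delta_Y,\Delta_Z$ and the tightness needed for $\|b\|=O_p(1)$; all other pieces reduce to uniform boundedness of the kernels and triangle inequalities in the tensor product RKHS.
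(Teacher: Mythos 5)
Your framework is essentially the paper's: rewrite both statistics as squared norms of explicit elements of the tensor-product RKHS, reduce the claim to $\sqrt n\,\|\hat\mu_L-\hat\mu^{(Z)}_{L,2}\|=o_p(1)$, and establish everything via a Hilbert-space CLT for mixing sequences. The identity $|\|a\|^2-\|b\|^2|\le\|a-b\|(\|a\|+\|b\|)$ is a clean way to package the reduction. Step (i) is also fine.

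Step (ii), however, has a genuine gap. You claim every contribution to $\hat\mu_L-\hat\mu^{(Z)}_{L,2}$ factors as (a) a $\Delta_\bullet$ (which is $O_p(n^{-1/2})$) times (b) an averaged quantity that is ``$O_p(1)$ by a mixing LLN'' times (c) possibly further $\Delta$'s, and you conclude the difference is $O_p(n^{-1})$. That conclusion does not follow: a term of the form (a)$\times$(b) with no (c) is only $O_p(n^{-1/2})$, and after multiplying by $\sqrt n$ this is $O_p(1)$, not $o_p(1)$. Carrying out the expansion, one does encounter exactly such terms, e.g.\ (up to reordering tensor factors) $\Delta_X\otimes\bigl(\tfrac1n\sum_i\bar\phi_Y(Y_i)\otimes\bar\phi_Z(Z_i)\bigr)$ and $\Delta_Y\otimes\bigl(\tfrac1n\sum_i\bar\phi_X(X_i)\otimes\bar\phi_Z(Z_i)\bigr)$, each with a single explicit $\Delta$. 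The missing observation—and the one the paper makes explicit—is that these averaged factors are not merely $O_p(1)$: under $\mathcal{H}_Z$ the population covariance operators $C_{YZ}$ and $C_{XZ}$ (and $C_{XYZ}$) are zero, so by the very same Hilbert-space CLT their empirical estimates are themselves $O_p(n^{-1/2})$. Your ``principal obstacle'' paragraph says one must verify each term has ``at least one $\Delta_\bullet$ attached,'' but one $\Delta_\bullet$ is not enough; each term must carry two $O_p(n^{-1/2})$ factors, and in the mixed cases the second one comes from the null hypothesis killing the relevant covariance operator. In short, $\mathcal{H}_Z$ must be invoked in step (ii) as well as step (i); as written, your accounting only establishes $\|\hat\mu_L-\hat\mu^{(Z)}_{L,2}\|=O_p(n^{-1/2})$, which is insufficient.
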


Our proof of Theorem \ref{theorem:norm-conv-in-prob} relies crucially on the following Lemma which we prove in Supplementary material \ref{supp:hilbert-clt}
\begin{lemma}\label{lemma:hilbertCLT}
Suppose that $(X_i)_{i=1}^n$ is drawn from a random process satisfying suitable mixing assumptions and that $k$ is a bounded kernel on $\mathcal{X}$. Then $\|\hat\mu_X - \mu_X\|_k = O_P(n^{-\frac{1}{2}})$

\end{lemma}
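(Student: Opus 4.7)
The plan is to control the second moment $\mathbb{E}\|\hat\mu_X - \mu_X\|_k^2$ and then appeal to Markov's inequality, since $O_P(n^{-1/2})$ does not require the full strength of a Hilbert space CLT. Set $f_i = k(X_i,\cdot) - \mu_X \in \mathcal{F}_k$, so that by definition $\hat\mu_X - \mu_X = \frac{1}{n}\sum_{i=1}^n f_i$ and each $f_i$ has mean zero in the RKHS. Boundedness of $k$ gives $\|f_i\|_k^2 \le 4\|k\|_\infty$ almost surely, so all moments exist.

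Next, I would expand
\[
\mathbb{E}\|\hat\mu_X - \mu_X\|_k^2 \;=\; \frac{1}{n^2}\sum_{i,j=1}^n \mathbb{E}\langle f_i, f_j\rangle_k
\]
and use the stationarity built into the suitable mixing assumption to rewrite the summand as $c(|i-j|)$, where $c(h) = \mathbb{E}\langle f_0, f_h\rangle_k = \mathbb{E}[k(X_0,X_h)] - \|\mu_X\|_k^2$. The diagonal contributes $O(n^{-1})$ trivially; the task reduces to bounding $\frac{1}{n^2}\sum_{i\neq j} c(|i-j|) \le \frac{2}{n}\sum_{h=1}^{n-1} |c(h)|$.

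The main obstacle is controlling $|c(h)|$ under mixing. Since $k$ is bounded, $\phi(x,y) := k(x,y)$ is a bounded measurable function of two variables lying in sigma-fields separated by lag $h$, and $c(h)$ is exactly the difference between its joint expectation under $\mathbb{P}_{X_0,X_h}$ and under the product $\mathbb{P}_{X_0}\otimes\mathbb{P}_{X_h}$. A standard covariance/total-variation inequality for the assumed mixing coefficients (e.g.\ the $\tau$- or $\alpha$-mixing bound used in the companion Central Limit Theorem of \citet{dehling2015bootstrap}) gives $|c(h)| \le C\|k\|_\infty \,\phi(h)$, and the suitable mixing assumption is precisely the one ensuring $\sum_{h\ge 0}\phi(h) < \infty$. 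This yields $\sum_{h=1}^{n-1}|c(h)| = O(1)$ and hence $\mathbb{E}\|\hat\mu_X - \mu_X\|_k^2 = O(n^{-1})$.

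Finally, Markov's inequality gives, for any $M>0$,
\[
\mathbb{P}\bigl(\sqrt{n}\,\|\hat\mu_X - \mu_X\|_k > M\bigr) \;\le\; \frac{n\,\mathbb{E}\|\hat\mu_X-\mu_X\|_k^2}{M^2} \;\le\; \frac{C}{M^2},
\]
which is the definition of $\|\hat\mu_X - \mu_X\|_k = O_P(n^{-1/2})$. I expect the only delicate point to be verifying that the precise mixing notion named in Section~\ref{section:details} supports the scalar covariance bound on $c(h)$; everything else is bookkeeping made trivial by the boundedness of $k$.
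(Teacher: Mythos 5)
Your proof is correct and takes a genuinely different route from the paper. The paper's own proof invokes Theorem~1.1 of \citet{dehling2015bootstrap}, a Hilbert-space CLT for $1$-approximating functionals of mixing sequences, as a black box: once one knows $\sqrt{n}(\hat\mu_X-\mu_X)$ converges in distribution to a tight Gaussian element of $\mathcal{F}_k$, the $O_P(n^{-1/2})$ rate is immediate. You instead control $\mathbb{E}\|\hat\mu_X-\mu_X\|_k^2$ directly and finish with Markov's inequality, avoiding the CLT altogether. The step you flag as delicate is in fact immediate under the paper's stated mixing condition: the \emph{suitable mixing assumption} is $\beta$-mixing with $\beta(m)=o(m^{-6})$, and since $\beta(h)$ is (half) the total-variation distance between $\mathbb{P}_{(X_0,X_h)}$ and $\mathbb{P}_{X_0}\otimes\mathbb{P}_{X_h}$ on the product $\sigma$-field, the bounded measurable function $k$ gives
\[
|c(h)| \;=\; \Bigl|\textstyle\int k\,d\bigl(\mathbb{P}_{(X_0,X_h)}-\mathbb{P}_{X_0}\otimes\mathbb{P}_{X_h}\bigr)\Bigr| \;\le\; 2\|k\|_\infty\,\beta(h),
\]
and $\sum_h\beta(h)<\infty$ under the assumed polynomial decay, so no appeal to a $\tau$- or $\alpha$-mixing covariance inequality is even needed. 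Your argument is more elementary and more self-contained than the paper's, and it relies only on boundedness of $k$ plus stationarity, matching the hypotheses of the lemma exactly; the CLT route additionally carries the $1$-approximating-functional machinery of the cited theorem. The tradeoff is that your method delivers only the rate, not a distributional limit, but that is all the lemma asserts and all the paper ever uses downstream.
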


\begin{proof}\textit{(Theorem \ref{theorem:norm-conv-in-prob})}
We provide a short sketch of the proof here; for a full proof, see Section \ref{section:proofs}.

The key idea is to note that we can rewrite $n\|\hat \mu_L\|^2$ in terms of the population centred kernel matrices $\overline{K}$, $\overline{L}$ and $\overline{M}$. Each of the resulting terms can in turn be converted to an inner product between quantities of the form $\hat\mu - \mu$, where $\hat\mu$ is an empirical estimator of $\mu$, and each $\mu$ is a mean embedding or covariance operator.

By applying Lemma \ref{lemma:hilbertCLT} to the $\hat\mu - \mu$, we show that most of these terms converge in probability to 0, with the residual terms equaling $n\| \hat \mu^{(Z)}_{L,2} \|^2$.
\end{proof}
As discussed in Section \ref{section:intro}, the essential idea of this proof is novel and the resulting proof is significantly more concise than previous approaches \citep{chwialkowski2014kernel,chwialkowski2014wild}. 

Theorem \ref{theorem:norm-conv-in-prob} is useful because the statistic $\| \hat \mu^{(Z)}_{L,2} \|^2$ is much easier to study under the non-\emph{i.i.d.}~assumption than $\|\hat \mu_L\|^2$. Indeed, it can expressed as a $V$-statistic (see Section \ref{subsection:v-statistic})
\[ 
V_n = \frac{1}{n^2} \mathlarger{\sum}_{1\leq i,j \leq n} \overline{\overline{k} \otimes \overline{l}}\otimes \overline{m} (S_i,S_j)
\]
where  $S_i = (X_i,Y_i,Z_i)$. The crucial observation is that
\[
 h := \overline{\overline{k} \otimes \overline{l}}\otimes \overline{m}
\]
is well behaved in the following sense.
\begin{theorem}\label{theorem:degenerate-kernel}
Suppose that $k$, $l$ and $m$ are bounded, symmetric, Lipschitz continuous kernels. Then $h$ is also bounded symmetric and Lipschitz continuous, and is moreover degenerate under $\mathcal{H}_Z$ i.e $\mathbb{E}_{S}h(S,s)=0$ for any fixed $s$.
\end{theorem}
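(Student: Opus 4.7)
The plan is to verify the four properties separately, exploiting the fact that $h$ is built from $k,l,m$ by two simple operations applied in sequence: \emph{kernel centering} $f\mapsto \overline{f}$ (where $\overline{f}(u,u') = f(u,u')-\mathbb{E}_U f(U,u')-\mathbb{E}_{U'} f(u,U')+\mathbb{E}_{U,U'} f(U,U')$), and \emph{tensoring} $(f,g)\mapsto f\otimes g$. If I show that each of these operations preserves boundedness, symmetry, and Lipschitz continuity, then by a straightforward induction on the construction of $h$ all three properties are inherited from $k,l,m$.

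The boundedness and symmetry checks are immediate: centering is a linear combination of $f$ with bounded symmetric expectations, and $(f\otimes g)((u,v),(u',v')) = f(u,u')g(v,v')$ is symmetric and bounded by $\|f\|_\infty \|g\|_\infty$. For Lipschitz continuity of the product, I would use the standard identity
\begin{equation*}
|f(u,u')g(v,v') - f(a,a')g(b,b')| \le \|g\|_\infty |f(u,u')-f(a,a')| + \|f\|_\infty |g(v,v')-g(b,b')|,
\end{equation*}
which reduces Lipschitz continuity of the product to Lipschitz continuity of the factors on a product metric space; centering preserves Lipschitz continuity because the expectations involved are constants in one variable and Lipschitz functions in the other (with the same constant as the original kernel). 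Applying these observations first to $\overline{k},\overline{l},\overline{m}$, then to the tensor $\overline{k}\otimes\overline{l}$, then to its centering $\overline{\overline{k}\otimes\overline{l}}$, and finally to the outer tensor with $\overline{m}$ gives the first three properties for $h$.

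For degeneracy under $\mathcal{H}_Z$, write $s=(x,y,z)$ and expand
\begin{equation*}
\mathbb{E}_S h(S,s) = \mathbb{E}_{(X',Y',Z')}\bigl[\overline{\overline{k}\otimes\overline{l}}\bigl((X',Y'),(x,y)\bigr)\cdot \overline{m}(Z',z)\bigr].
\end{equation*}
Under $\mathcal{H}_Z$ we have $Z\independent (X,Y)$, so the expectation factorises and the second factor is
\begin{equation*}
\mathbb{E}_{Z'}\overline{m}(Z',z) = \bigl\langle \mathbb{E}_{Z'}[m(Z',\cdot)-\mu_Z],\, m(z,\cdot)-\mu_Z\bigr\rangle = 0,
\end{equation*}
by the defining centering identity $\mathbb{E}_{Z'}[m(Z',\cdot)] = \mu_Z$. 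This gives $\mathbb{E}_S h(S,s)=0$ for every fixed $s$.

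I do not expect any serious obstacle. The one spot that warrants care is the Lipschitz step for the twice-centered product kernel $\overline{\overline{k}\otimes\overline{l}}$, where one must verify that the product Lipschitz constant for $\overline{k}\otimes\overline{l}$ (which depends on $\|\overline{k}\|_\infty$ and $\|\overline{l}\|_\infty$, both finite by the boundedness step) is indeed preserved under a second centering; but this follows from exactly the same argument used for the first centering, applied on the product domain $\mathcal{X}\times\mathcal{Y}$ with the product metric. Everything else is bookkeeping.
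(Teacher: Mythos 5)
Your proposal takes essentially the same approach as the paper: boundedness, symmetry, and Lipschitz continuity are established by showing that the two constructors (centering $f\mapsto\bar f$ and tensoring $(f,g)\mapsto f\otimes g$) each preserve these properties and then applying them inductively to build $h$; degeneracy follows from the factorisation $\mathbb{E}_{XYZ}=\mathbb{E}_{XY}\mathbb{E}_Z$ under $\mathcal{H}_Z$, with the $Z$-factor $\mathbb{E}_{Z'}\bar m(Z',z)=\langle\mathbb{E}_{Z'}\bar\phi(Z'),\bar\phi(z)\rangle=0$ killing the product. Minor remark: centering does not preserve the Lipschitz \emph{constant} — the paper's computation gives $2C_k$ rather than $C_k$ — but this is immaterial since all that is needed is finiteness.
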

\begin{proof}
See Section \ref{section:proofs}
\end{proof}
The asymptotic analysis of such a $V$-statistic for non-\emph{i.i.d.}~data is still complex, but we can appeal to prior work: \citet{leucht2013dependent} showed a way to estimate any given quantile of such a $V$-statistic under the null hypothesis using a method called the wild bootstrap. This, combined with analysis of the $V$-statistic under the alternative hypothesis provided in Theorem 2 of \citet{chwialkowski2014wild}\footnote{Note that similar results are presented in \citet{leucht2013dependent} as specific cases.}, results in statistical test (see Algorithm \ref{alg:Lancaster}).
\begin{algorithm}[tb]
   \caption{Test $\mathcal{H}_Z$ with Wild Bootstrap}
   \label{alg:Lancaster}
\begin{algorithmic}
   \STATE {\bfseries Input:} $\tilde{K}$, $\tilde{L}$, $\tilde{M}$, each size $n\times n$, $N$= number of bootstraps, $\alpha=$ p-value threshold
   \STATE $n\|\hat{\mu}_L\|^2 = \frac{1}{n}\left(\widetilde{\left( \tilde{K} \circ \tilde{L}\right) }\circ \tilde{M} \right)_{++}$
   \STATE samples = zeros(1,N)
   \FOR{$i=1$ {\bfseries to} $N$}
   \STATE Draw random vector W according to Equation \ref{equation:bootstrap}
   \STATE samples[$i$] = $\frac{1}{n}W^\intercal\left( \widetilde{\left( \tilde{K} \circ \tilde{L}\right) }\circ \tilde{M} \right)W$
   \ENDFOR
   \IF{sum($n\|\hat{\mu}_L\|^2 >$ samples)$>\frac{\alpha}{N}$}
   \STATE Reject $\mathcal{H}_Z$
   \ELSE
   \STATE Do not reject $\mathcal{H}_Z$
   \ENDIF
\end{algorithmic}
\end{algorithm}

In Section \ref{section:details} we discuss the wild bootstrap and provide results regarding consistency and Type I error control.

\subsection{MULTIPLE TESTING CORRECTION}
In the Lancaster test, we reject the composite null hypothesis $\mathcal{H}_0$ if and only if we reject all three of the components. In \citet{sejdinovic2013kernel}, it is suggested that the Holm-Bonferroni correction be used to account for multiple testing \citep{holm1979simple}. We show here that more relaxed conditions on the p-values can be used while still bounding the Type I error, thus increasing test power.

Denote by $\mathcal{A}_*$ the event that $\mathcal{H}_*$ is rejected. Then
\begin{align*}
\mathbb{P}(\mathcal{A}_0) &= \mathbb{P}(\mathcal{A}_X \land \mathcal{A}_Y \land \mathcal{A}_Z) \\
&\leq \min\{\mathbb{P}(\mathcal{A}_X), \mathbb{P}(\mathcal{A}_Y), \mathbb{P}(\mathcal{A}_Z)\}
\end{align*}
If $\mathcal{H}_0$ is true, then so must one of the components. Without loss of generality assume that $\mathcal{H}_X$ is true. If we use significance levels of $\alpha$ in each test individually then $\mathbb{P}(\mathcal{A}_X) \leq \alpha$ and thus $\mathbb{P}(\mathcal{A}_0) \leq \alpha$.

Therefore rejecting $\mathcal{H}_0$ in the event that each test has p-value less than $\alpha$ individually guarantees a Type I error overall of at most $\alpha$. In contrast, the Holm-Bonferonni method requires that the sorted p-values be lower than $[\frac{\alpha}{3},\frac{\alpha}{2},\alpha]$ in order to reject the null hypothesis overall. It is therefore more conservative than necessary and thus has worse test power compared to the `simple correction' proposed here. This is experimentally verified in Section \ref{section:experiments}.

\section{THE WILD BOOTSTRAP}\label{section:details}

In this section we discuss the wild bootstrap and provide consistency and Type I error results for the proposed Lancaster test.

\subsection{TEMPORAL DEPENDENCE}
There are various formalisations of memory or `mixing' of a random process \citep{doukhan1994mixing,bradley2005basic,dedecker2007weak}; of relevance to this paper is the following :


\begin{definition}
A process $(X_t)_{t}$ is \emph{$\beta$-mixing} (also known as \emph{absolutely regular}) if $\beta(m) \longrightarrow 0$ as $m\longrightarrow \infty$, where
\[ \beta(m) = \frac{1}{2} \sup_n \sup \sum_{i=1}^I \sum_{j=1}^J | \mathbb{P}(A_i \cap B_j) - \mathbb{P}(A_i)\mathbb{P}(B_j)| \]
where the second supremum is taken  over all finite partitions $\{A_1,\ldots, A_I \}$ and  $\{B_1,\ldots, B_J\}$ of the sample space such that $A_i \in \mathcal{F}_1^n$ and $B_j \in \mathcal{F}_{n+m}^\infty$ and $\mathcal{F}_b^c = \sigma(X_b,X_{b+1},\ldots,X_{c})$
\end{definition}
A related notion is that of $\tau$-mixing. This is a property required to apply the wild bootstrap method of \citet{leucht2013dependent}, but we do not discuss $\tau$-mixing here since it is implied by $\beta$-mixing under the assumption that $X_i$ has finite $p$-th moment for any $p>1$.

\subsection*{SUITABLE MIXING ASSUMPTIONS} We assume that the random process $S_i = (X_i,Y_i,Z_i)$ is $\beta$ mixing with mixing coefficients satisfying   $\beta(m)  = o(m^{-6})$. Throughout this paper we refer to this assumption as \emph{suitable mixing assumptions}.

\subsection{$V$-STATISTICS}\label{subsection:v-statistic}

A $V$-statistic of a 2-argument, symmetric function $h$ given  observations $\mathcal{S}_n = \{S_1,\ldots,S_n\}$  is \citep{serfling2009approximation}:

\[ V_n =  \frac{1}{n^2} \mathlarger{\sum}_{1\leq i,j \leq n} h(S_i,S_j)\]

We call $nV_n$ a \emph{normalised} $V$-statistic. We call $h$ the \emph{core} of $V$ and we say that $h$ is \emph{degenerate} if, for any $s_1$, $\mathbb{E}_{S_2 \sim \mathbb{P}}[h(s_1,S_2)] = 0$, in which case we say that $V$ is a \emph{degenerate $V$-statistic}. Many kernel test statistics can be viewed as normalised $V$-statistics which, under the null hypothesis, are degenerate. As mentioned in the previous section,  $\|\hat \mu^{(Z)}_{L,2}\|^2$ is a $V$-statistic. Theorems \ref{theorem:norm-conv-in-prob} and \ref{theorem:degenerate-kernel} together imply that, under $\mathcal{H}_Z$, it can be treated as a degenerate $V$-statistic. 

\subsection{WILD BOOTSTRAP}

If the test statistic has the form of a normalised $V$-statistic, then provided certain extra conditions are met, the wild bootstrap of \citet{leucht2013dependent} is a method to directly resample the test statistic under the null hypothesis. These conditions can be categorised as concerning: (1) appropriate mixing of the process from which our observations are drawn; (2) the core of the $V$-statistic. 

The condition on the core that is of crucial importance to this paper is that it must be degenerate. Theorem \ref{theorem:degenerate-kernel} justifies our use of the wild bootstrap in the Lancaster interaction test.

Given the statistic $nV_n$, \citet{leucht2013dependent} tells us that a random vector $W$ of length $n$ can be drawn such that the bootstrapped statistic\footnote{Note that for fixed $\mathcal{S}_n$, $nV_b$ is a random variable through the randomness introduced by $W$}
\[nV_b=\frac{1}{n}\sum_{i,j}W_{i}h(S_i,S_j)W_{j}\]
is distributed according to the null distribution of $nV_n$. 

By generating many such $W$ and calculating $nV_b$ for each, we can estimate the quantiles of $nV$. 

\subsection{GENERATING $W$}

The process generating $W$ must satisfy conditions (B2) given on page 6 of \citet{leucht2013dependent} for $nV_b$ to correctly resample from the null distribution of $nV_n$. For brevity, we provide here only an example of such a process; the interested reader should consult \citet{leucht2013dependent} or Appedix A of \citet{chwialkowski2014wild} for a more detailed discussion of the bootstrapping process. The following bootstrapping process was used in the experiments in Section \ref{section:experiments}:
\begin{equation}\label{equation:bootstrap}
W_t = e^{-1/l_n}W_{t-1} + \sqrt{1 - e^{-2/l_n}}\epsilon_t 
\end{equation}
where $W_1$, $\epsilon_1, \ldots, \epsilon_t$ are independent $\mathcal{N}(0,1)$ random variables. $l_n$ should be taken from a sequence $\{l_n\}$ such that $\lim_{n\longrightarrow\infty}l_n = \infty$; in practice we used $l_n=20$ for all of the experiments since the values of $n$ were roughly comparable in each case.
\subsection{CONTROL OF TYPE I ERROR}

The following theorem shows that by estimating the quantiles of the wild bootstrapped statistic $nV_b$ we correctly control the Type I error when testing $\mathcal{H}_Z$.

\begin{theorem}\label{theorem:quantiles-converge}
Suppose that $(X_i,Y_i,Z_i)_{i=1}^n$ are drawn from a random process satisfying suitable mixing conditions, and that $W$ is drawn from a process satisfying (B2) in \citet{leucht2013dependent}. Then asymptotically, the quantiles of
\[nV_b = \frac{1}{n}W^\intercal\left( \overline{\left( \bar{K} \circ \bar{L}\right) }\circ \bar{M} \right)W\]
converge to those of $ n\| \hat \mu_L\|^2$. 
\end{theorem}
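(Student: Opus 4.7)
The proof strategy is to chain together three ingredients: (i) the wild bootstrap consistency result of \citet{leucht2013dependent} for degenerate $V$-statistics of mixing processes, (ii) Theorem \ref{theorem:degenerate-kernel} which verifies the hypotheses needed on the core $h$, and (iii) Theorem \ref{theorem:norm-conv-in-prob} which lets us transfer the quantile convergence from $nV_n = n\|\hat\mu_{L,2}^{(Z)}\|^2$ to the actual test statistic $n\|\hat\mu_L\|^2$.

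First, I would observe that $nV_b$ is exactly the wild bootstrap statistic associated with the degenerate $V$-statistic $nV_n = n\|\hat\mu_{L,2}^{(Z)}\|^2$ whose core is $h = \overline{\overline{k}\otimes\overline{l}}\otimes\overline{m}$. By Theorem \ref{theorem:degenerate-kernel}, $h$ is bounded, symmetric, Lipschitz continuous, and degenerate under $\mathcal{H}_Z$, which are precisely the conditions on the core required by \citet{leucht2013dependent}. The suitable mixing assumption ($\beta(m)=o(m^{-6})$) implies the $\tau$-mixing requirements on the process $S_i$ via the moment comment made in Section \ref{section:details}, and $W$ satisfies (B2) by hypothesis. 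Invoking Theorem 2.1 of \citet{leucht2013dependent} then yields that, conditional on the data, $nV_b$ converges in distribution (in probability) to the same limit as $nV_n$ under $\mathcal{H}_Z$. In particular, every continuity point quantile of $nV_b$ converges to the corresponding quantile of the null limit of $nV_n$.

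Next, Theorem \ref{theorem:norm-conv-in-prob} asserts that $n\|\hat\mu_L\|^2 - nV_n \to 0$ in probability under $\mathcal{H}_Z$. By Slutsky's theorem this implies $n\|\hat\mu_L\|^2$ and $nV_n$ share the same weak limit, so their quantiles coincide asymptotically at every continuity point. Composing with the previous step, the quantiles of the bootstrap statistic $nV_b$ therefore converge to the quantiles of $n\|\hat\mu_L\|^2$, which is the claim.

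The only step requiring care is the quantile convergence statement itself: weak convergence of $nV_b$ to the null limit of $nV_n$ only yields convergence of quantiles at continuity points of the limit distribution function. The limiting distribution is an infinite weighted sum of (shifted) $\chi^2$ variables, which has a continuous CDF everywhere except possibly at $0$, so the quantiles at any fixed significance level $\alpha\in(0,1)$ are continuity points and the argument goes through. The remaining ingredients are either assumed (mixing, $W$) or already established (Theorems \ref{theorem:norm-conv-in-prob} and \ref{theorem:degenerate-kernel}), so no further substantive work is needed beyond citing \citet{leucht2013dependent}.
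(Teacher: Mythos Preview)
Your proof is correct and follows essentially the same approach as the paper: verify the hypotheses of \citet{leucht2013dependent} via Theorem \ref{theorem:degenerate-kernel} and the mixing assumptions, then transfer the resulting quantile convergence from $n\|\hat\mu_{L,2}^{(Z)}\|^2$ to $n\|\hat\mu_L\|^2$ using Theorem \ref{theorem:norm-conv-in-prob} and Slutsky. The paper cites Theorem~3.1 of \citet{leucht2013dependent} rather than 2.1, and your added remark about continuity points of the limiting weighted-$\chi^2$ distribution is a welcome clarification absent from the paper's terser argument.
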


\begin{proof}
See Supplementary material \ref{supp:quantile-proof}
\end{proof}


\subsection{(SEMI-)CONSISTENCY OF TESTING PROCEDURE}

Note that in order to achieve consistency for this test, we would need that $\mathcal{H}_0 \iff \Delta_LP = 0$. Unfortunately this does not hold - in \citet{sejdinovic2013kernel} examples are given of distributions for which $\mathcal{H}_0$ is false, and yet $\Delta_LP = 0$. 

However, the following result does hold:

\begin{theorem}\label{theorem:consistent}
Suppose that $\Delta_LP \not =0$. Then as $n\longrightarrow\infty$, the probability of correctly rejecting $\mathcal{H}_0$ converges to 1.
\end{theorem}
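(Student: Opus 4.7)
The plan is to reduce consistency of the composite test to consistency of each of its three sub-tests. Since $k,l,m$ are characteristic by assumption, the tensor product kernel $k\otimes l\otimes m$ is characteristic on the product space, so $\Delta_L P \neq 0$ is equivalent to $\|\mu_L\| > 0$; moreover, by the contrapositive of \eqref{eqn:lancaster-zero}, $\Delta_L P \neq 0$ forces each of $\mathcal{H}_X$, $\mathcal{H}_Y$, $\mathcal{H}_Z$ to be false. Because $\mathcal{H}_0$ is rejected iff all three sub-hypotheses are rejected, it suffices to show that the wild bootstrap test of $\mathcal{H}_Z$ rejects with probability tending to $1$; the other two cases follow symmetrically.

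Writing $q^{(b)}_{1-\alpha}$ for the $1-\alpha$ empirical quantile of the bootstrap samples, consistency of the sub-test would follow from two claims: (i) $n\|\hat\mu_L\|^2 \to \infty$ in probability under the alternative, and (ii) $q^{(b)}_{1-\alpha} = O_P(1)$. Together these imply $\mathbb{P}(n\|\hat\mu_L\|^2 > q^{(b)}_{1-\alpha}) \to 1$, which is the required consistency of the sub-test.

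For (i), I would extend Lemma \ref{lemma:hilbertCLT} to the tensor product RKHS $\mathcal{F}_k \otimes \mathcal{F}_l \otimes \mathcal{F}_m$ to obtain $\|\hat\mu_L - \mu_L\| = O_P(n^{-1/2})$; this is justified because the joint process $(X_i,Y_i,Z_i)$ inherits $\beta$-mixing from its marginals and the tensor product kernel is bounded. A Slutsky-type expansion of $\|\hat\mu_L\|^2 - \|\mu_L\|^2$ then gives $\|\hat\mu_L\|^2 \to \|\mu_L\|^2 > 0$ in probability, and multiplication by $n$ yields divergence. For (ii), the bootstrap statistic $nV_b = n^{-1}W^\intercal H W$ is, conditional on the data, a quadratic form in $W$ whose distribution is specified independently of the data-generating hypothesis, and whose coefficient matrix $H$ has bounded entries that converge under mixing to deterministic population limits. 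By the tightness arguments of \citet{leucht2013dependent} underlying Theorem \ref{theorem:quantiles-converge}, $nV_b$ converges weakly (conditional on the data) to a fixed proper limit regardless of whether the observations come from $\mathcal{H}_Z$ or its alternative.

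The principal obstacle is (ii): Theorem \ref{theorem:quantiles-converge} is phrased only under $\mathcal{H}_Z$, so I would need to verify that the quantile-convergence arguments of \citet{leucht2013dependent} remain valid when $h$ ceases to be degenerate under the true data distribution. The resolution should be that the bootstrap construction only uses population centering and bounded kernels, so its tightness does not invoke the null hypothesis; nevertheless, carefully separating the part of their proof that provides tightness from the part identifying the limit distribution is the most delicate step. Claim (i) is largely routine bookkeeping once Lemma \ref{lemma:hilbertCLT} has been lifted to tensor-product embeddings.
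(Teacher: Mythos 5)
Your proposal follows the same two-claim structure as the paper's own proof: show the test statistic diverges under the alternative, and show the bootstrap quantiles remain bounded in probability. The paper, however, disposes of both claims in a single stroke by citing Theorem~2 of \citet{chwialkowski2014wild} (flagged earlier in the paper's footnote on the wild bootstrap), which states precisely that for a non-degenerate bounded Lipschitz core the normalised $V$-statistic diverges while the wild-bootstrapped statistic converges to a finite-variance limit. You instead attempt to rederive both facts, and the two gaps you leave open are exactly the ones that citation closes. On claim~(ii) you correctly identify the delicacy --- Theorem~\ref{theorem:quantiles-converge} is stated only under $\mathcal{H}_Z$, where $h$ is degenerate --- and your intuition that tightness of $nV_b$ does not require degeneracy is right (the non-degenerate component contributes terms like $\theta\bigl(n^{-1/2}\sum_i W_i\bigr)^2$, which are $O_P(1)$ because $W$ is mean-zero), but rather than dissecting \citet{leucht2013dependent}, the clean route is the already-cited Chwialkowski et al.\ result. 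On claim~(i), your plan to ``extend Lemma~\ref{lemma:hilbertCLT} to the tensor-product RKHS'' glosses over the fact that $\hat\mu_L$ is \emph{not} a simple empirical average of a fixed feature map $\psi(S_i)$: it is built from empirically centred feature maps $\tilde\phi$, so it decomposes into several $\hat\mu-\mu$ pieces before Lemma~\ref{lemma:hilbertCLT} applies term-by-term. This is surmountable (by the same expansion machinery used in the proof of Theorem~\ref{theorem:norm-conv-in-prob}), but it is more than ``routine bookkeeping'' as stated, and again the cited theorem sidesteps it. In short: same route, correct instincts, but the paper resolves your open steps by citation rather than re-proof, and your appeal to Lemma~\ref{lemma:hilbertCLT} on $\hat\mu_L$ as written is not quite applicable without an intermediate decomposition.
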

\begin{proof}
See Supplementary material \ref{supp:consistent}
\end{proof}

At the time of writing, a characterisation of distributions for which $\mathcal{H}_0$ is false yet $\Delta_LP=0$ is unknown. Therefore, if we reject $\mathcal{H}_0$ then we conclude that the distribution does not factorise; if we fail to reject $\mathcal{H}_0$ then we cannot conclude that the distribution factorises.

\section{EXPERIMENTS}\label{section:experiments}

The Lancaster test described above amounts to a method to test each of the sub-hypotheses $\mathcal{H}_X, \mathcal{H}_Y, \mathcal{H}_Z$. Rather than using the Lancaster test statistic with wild bootstrap to test each of these, we could instead use HSIC. For example, by considering the pair of variables $(X,Y)$ and $Z$ with kernels $k\otimes l$ and $m$ respectively, HSIC can be used to test $\mathcal{H}_Z$. Similar grouping of the variables can be used to test $\mathcal{H}_X$ and $\mathcal{H}_Y$. Applying the same multiple testing correction as in the Lancaster test, we derive an alternative test of dependence between three variables. We refer to this HSIC based procedure as \emph{3-way HSIC}.

In the case of \emph{i.i.d.}~observations, it was shown in \citet{sejdinovic2013kernel} that Lancaster statistical test is more sensitive to dependence between three random variables than the above HSIC-based test when pairwise interaction is weak but joint interaction is strong. In this section, we demonstrate that the same is true in the time series case on synthetic data.

\subsection{WEAK PAIRWISE INTERACTION, STRONG JOINT INTERACTION}\label{experiment1}

This experiment demonstrates that the Lancaster test has greater power than 3-way HSIC when the pairwise interaction is weak, but joint interaction is strong.


Synthetic data were generated from autoregressive processes $X$, $Y$ and $Z$ according to:
\begin{align*}
X_t &= \frac{1}{2}X_{t-1} + \epsilon_t\\
Y_t &= \frac{1}{2}Y_{t-1} + \eta_t\\
Z_t &= \frac{1}{2}Z_{t-1} + d |\theta_t|\text{sign}(X_t Y_t) + \zeta_t\\
\end{align*}
where $X_0, Y_0, Z_0, \epsilon_t, \eta_t, \theta_t$ and $\zeta_t$ are \emph{i.i.d.}~$\mathcal{N}(0,1)$ random variables and $d\in\mathbb{R}$, called the \emph{dependence} coefficient, determines the extent to which the process $(Z_t)_t$ is dependent on $(X_t,Y_t)_t$.

Data were generated with varying values of $d$. For each value of $d$, 300 datasets were generated, each consisting of 1200 consecutive observations of the variables. Gaussian kernels with bandwidth parameter 1 were used on each variable, and 250 bootstrapping procedures were used for each test on each dataset.

Observe that the random variables are pairwise independent but jointly dependent. Both the Lancaster and 3-way HSIC tests should be able to detect the dependence and therefore reject the null hypothesis in the limit of infinite data. In the finite data regime, the value of $d$ affects drastically how hard it is to detect the dependence. The results of this experiment are presented in Figure \ref{weak-pairwise-strong-joint}, which shows that the Lancaster test achieves very high test power with weak dependence coefficients compared to 3-way HSIC. Note also that when using the simple multiple testing correction a higher test power is achieved than with the Holm-Bonferroni correction.

\begin{figure}[ht]
\vskip 0.2in
\begin{center}
\centerline{\includegraphics[scale=0.6]{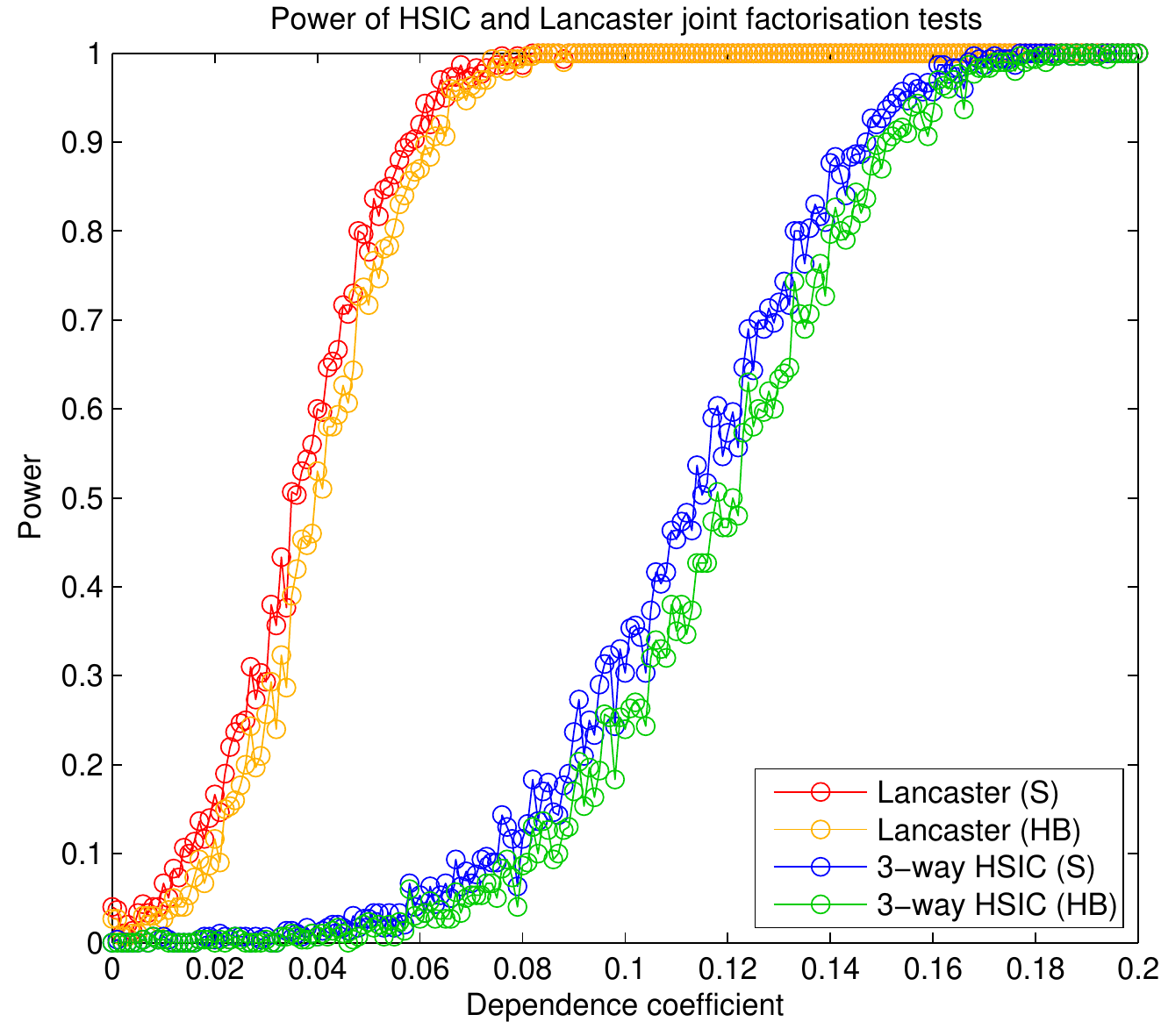}}
\caption{Results of experiment in Section \ref{experiment1}. (S) refers to the simple multiple correction; (HB) refers to Holm-Bonferroni. The Lancaster test is more sensitive to dependence than 3-way HSIC, and test power for both tests is higher when using the simple correction rather than the Holm-Bonferroni multiple testing correction.}
\label{weak-pairwise-strong-joint}
\end{center}
\vskip -0.2in
\end{figure} 

\subsection{FALSE POSITIVE RATES}\label{experiment2}

This experiment demonstrates that in the time series case, existing permutation bootstrap methods fail to control the Type I error, while the  wild bootstrap correctly identifies test statistic thresholds and appropriately controls Type I error.

Synthetic data were generated from autoregressive processes $X$, $Y$ and $Z$ according to:
\begin{align*}
X_t &= aX_{t-1} + \epsilon_t\\
Y_t &= aY_{t-1} + \eta_t\\
Z_t &= aZ_{t-1} +  \zeta_t\\
\end{align*}
where $X_0, Y_0, Z_0, \epsilon_t, \eta_t$ and $\zeta_t$ are \emph{i.i.d.}~$\mathcal{N}(0,1)$ random variables and $a$, called the \emph{dependence coefficient}, determines how temporally dependent the processes are. The null hypothesis in this example is true as each process is independent of the others.

The Lancaster test was performed using both the Wild Bootstrap and the simple permutation bootstrap (used in the \emph{i.i.d.}~case) in order to sample from the null distributions of the test statistic. We used a fixed desired false positive rate $\alpha = 0.05$ with sample of size 1000, with 200 experiments run for each value of $a$. Figure \ref{wildBootstrap_is_necessary} shows the false positive rates for these two methods for varying $a$. It shows that as the processes become more dependent, the false positive rate for the permutation method becomes very large, and is not bounded by the fixed $\alpha$, whereas the false positive rate for the Wild Bootstrap method is bounded by $\alpha$.
\begin{figure}[ht]
\vskip 0.2in
\begin{center}
\centerline{\includegraphics[scale=0.6]{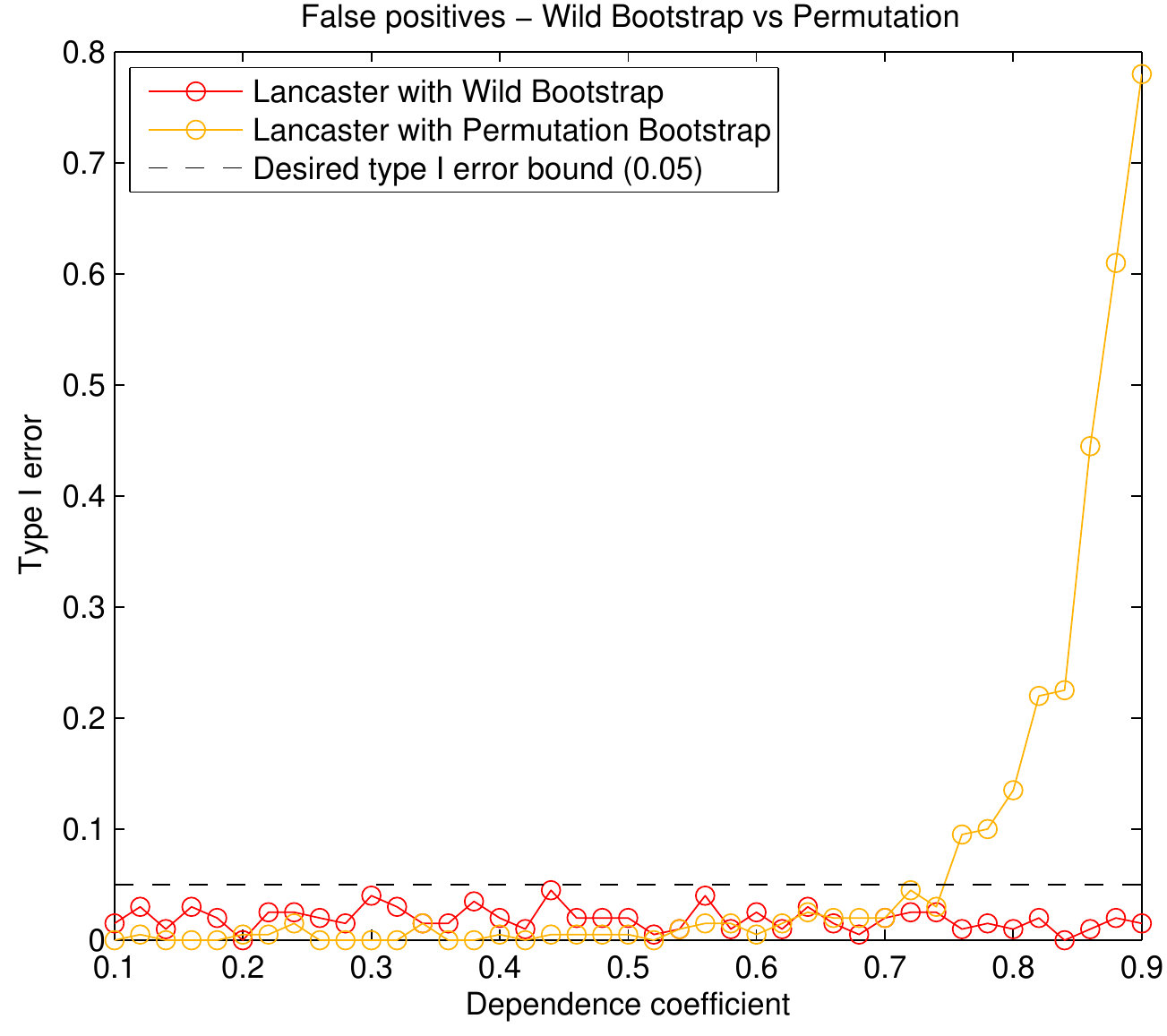}}
\caption{Results of experiment in section \ref{experiment2}. Whereas the wild bootstrap succeeds in controlling the Type I error across all values of the dependence coefficient, the permutation bootstrap fails to control the Type I error as it does not sample from the correct null distribution as temporal dependence between samples increases.}
\label{wildBootstrap_is_necessary}
\end{center}
\vskip -0.2in
\end{figure}

\subsection{STRONG PAIRWISE INTERACTION}\label{experiment3}
This experiment demonstrates a limitation of the Lancaster test. When pairwise interaction is strong, 3-way HSIC has greater test power than Lancaster.

Synthetic data were generated from autoregressive processes $X$, $Y$ and $Z$ according to:
\begin{align*}
X_t &= \frac{1}{2}X_{t-1} + \epsilon_t\\
Y_t &= \frac{1}{2}Y_{t-1} + \eta_t\\
Z_t &= \frac{1}{2}Z_{t-1} + d(X_t + Y_t) + \zeta_t\\
\end{align*}
where $X_0, Y_0, Z_0, \epsilon_t, \eta_t$ and $\zeta_t$ are \emph{i.i.d.}~$\mathcal{N}(0,1)$ random variables and $d\in\mathbb{R}$, called the \emph{dependence} coefficient, determines the extent to which the process $(Z_t)_t$ is dependent on $X_t$ and $Y_t$.

Data were generated with varying values for the dependence coefficient. For each value of $d$, 300 datasets were generated, each consisting of 1200 consecutive observations of the variables. Gaussian kernels with bandwidth parameter 1 were used on each variable, and 250 bootstrapping procedures were used for each test on each dataset.

In this case $Z_t$ is pairwise-dependent on both of $X_t$ and $Y_t$, in addition to all three variables being jointly dependent. Both the Lancaster and 3-way HSIC tests should be capable of detecting the dependence and therefore reject the null hypothesis in the limit of infinite data. The results of this experiment are presented in Figure \ref{strong-pairwise}, which demonstrates that in this case the 3-way HSIC test is more sensitive to the dependence than the Lancaster test.

\begin{figure}[ht]
\vskip 0.2in
\begin{center}
\centerline{\includegraphics[scale=0.6]{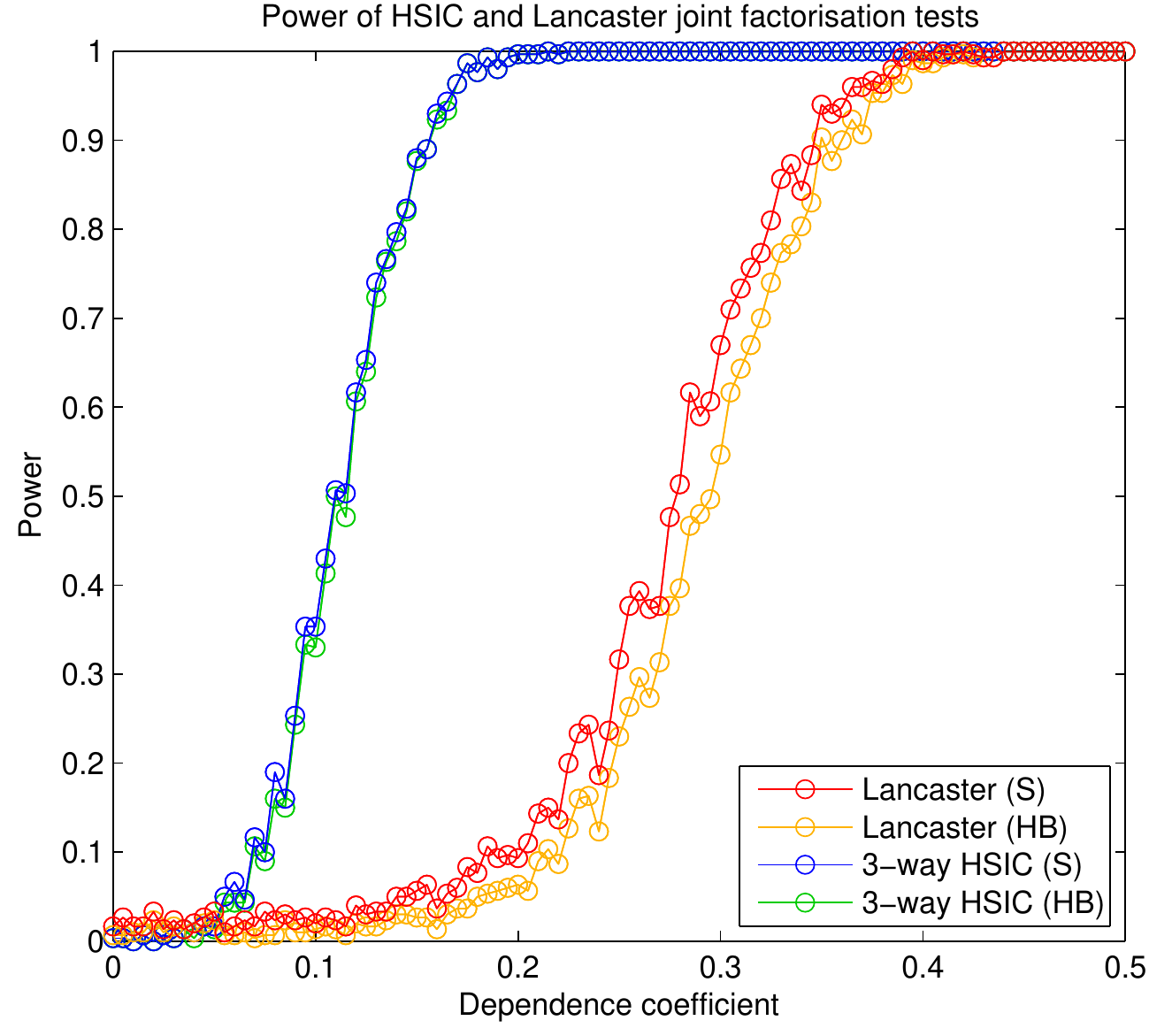}}
\caption{Results of experiment in Section \ref{experiment3}. (S) refers to the simple multiple correction; (HB) refers to Holm-Bonferroni. The Lancaster test is less sensitive to dependence than 3-way HSIC, and test power in both cases is higher when using the simple correction rather than the Holm-Bonferroni multiple testing correction.}
\label{strong-pairwise}
\end{center}
\vskip -0.2in
\end{figure} 

\subsection{FOREX DATA}

Exchange rates between three currencies (GBP, USD, EUR) at 5 minute intervals over 7 consecutive trading days were obtained. The data were processed by taking the returns (difference between consecutive terms within each time series, $x_t^r = x_t-x_{t-1}$) which were then normalised (divided by standard deviation). We performed the Lancaster test, 3-way HSIC and pairwise HSIC on using the first $800$ entries of each processed series. All tests rejected the null hypothesis. The Lancaster test returned $p$-values of 0 for each of $\mathcal{H}_X$, $\mathcal{H}_Y$ and $\mathcal{H}_Z$ with $10000$ bootstrapping procedures.

We then shifted one of the time series and repeated the tests (i.e.~we used entries $1$ to $800$ of two of the processed series and entries $801$ to $1600$ of the third). In this case, pairwise HSIC still detected dependence between the two unshifted time series, and both Lancaster and 3-way HSIC did not reject the null hypothesis that the joint distribution factorises. The Lancaster test returned $p$-values of $0.2708$, $0.2725$ and $0.1975$ for $\mathcal{H}_X$, $\mathcal{H}_Y$ and $\mathcal{H}_Z$ respectively.

In both cases, the Lancaster test behaves as expected. Due to arbitrage, any two exchange rates should determine the third and the Lancaster test correctly identifies a joint dependence in the returns. However, when we shift one of the time series, we break the dependence between it and the other series. Lancaster correctly identifies here that the underlying distribution does factorise.

\section{DISCUSSION AND FUTURE RESEARCH}\label{section:discussion}

We demonstrated that the Lancaster test is more sensitive than 3-way HSIC when pairwise interaction is weak, but that the opposite is true when pairwise interaction is strong. It is curious that the two tests have different strengths in this manner, particularly when considering the very similar forms of the statistics in each case. Indeed, to test $\mathcal{H}_Z$ using the Lancaster statistic, we bootstrap the following:

\begin{align*}
n\|\Delta_L\hat{P}\|^2 = \frac{1}{n}\left(\widetilde{\left( \tilde{K} \circ \tilde{L}\right) }\circ \tilde{M} \right)_{++}
\end{align*}

while for the 3-way HSIC test we bootstrap:

\begin{align*}
nHSIC_b = \frac{1}{n}\left(\widetilde{\left( K \circ L\right) }\circ \tilde{M} \right)_{++}
\end{align*}

These two quantities differ only in the centring of $K$ and $L$, amounting to constant shifts in the respective feature spaces of the kernels $k$ and $l$. This difference has the consequence of quite drastically changing the types of dependency to which each statistic is sensitive. A formal characterisation of the cases in which the Lancaster statistic is more sensitive than 3-way HSIC would be desirable.

\section{PROOFS}\label{section:proofs}

An outline of the proof of Theorem \ref{theorem:norm-conv-in-prob} was given in Section \ref{section:main}; here we provide the full proof, as well as a proof of Theorem \ref{theorem:degenerate-kernel}. 
%
%
%
%

\begin{proof}(Theorem \ref{theorem:norm-conv-in-prob})

By observing that
\begin{align*}
& \phi_X(X_i)- \frac{1}{n}\sum_k\phi_X(X_k) \\
= \enspace&  (\phi_X(X_i) - \mu_X) - \frac{1}{n}\sum_k (\phi_X(X_k) - \mu_X)\\
= \enspace&\bar\phi_X(X_i)- \frac{1}{n}\sum_k\bar\phi_X(X_k)
\end{align*}
we can therefore expand $\tilde{K}$ in terms of $\bar{K}$ as
\begin{align*}
&\tilde{K}_{ij} \\ 
&= \langle\phi_X(X_i)- \frac{1}{n}\sum_k\phi_X(X_k),\phi_X(X_j) - \frac{1}{n}\sum_k\phi_X(X_k)\rangle \\
&= \langle\bar\phi_X(X_i)- \frac{1}{n}\sum_k\bar\phi_X(X_k),\bar\phi_X(X_j) - \frac{1}{n}\sum_k\bar\phi_X(X_k)\rangle \\
&= \bar{K}_{ij} - \frac{1}{n}\sum_k\bar{K}_{ik} - \frac{1}{n}\sum_k\bar{K}_{jk} + \frac{1}{n^2}\sum_{kl}\bar{K}_{kl}
\end{align*}

and expanding $\tilde{L}$ and $\tilde{M}$ in a similar way, we can rewrite the Lancaster test statistic as
\begin{align*} 
n\|\hat \mu_L\|^2 &= 
\frac{1}{n}(\bar{K} \circ \bar{L}\circ \bar{M})_{++} &&-
\frac{2}{n^2}((\bar{K}\circ \bar{L}) \bar{M})_{++} \\&- 
\frac{2}{n^2}((\bar{K} \circ \bar{M}) \bar{L})_{++} &&- 
\frac{2}{n^2}((\bar{M} \circ \bar{L}) \bar{K})_{++} \\&+ 
\frac{1}{n^3}(\bar{K} \circ \bar{L})_{++} \bar{M}_{++} &&+ 
\frac{1}{n^3}(\bar{K} \circ \bar{M})_{++} \bar{L}_{++} \\&+ 
\frac{1}{n^3}(\bar{L} \circ \bar{M})_{++} \bar{K}_{++} &&+ 
\frac{2}{n^3}(\bar{M}\bar{K}\bar{L})_{++} \\&+ 
\frac{2}{n^3}(\bar{K}\bar{L}\bar{M})_{++} &&+ 
\frac{2}{n^3}(\bar{K}\bar{M}\bar{L})_{++} \\&+ 
\frac{4}{n^3}tr(\bar{K}_+ \circ \bar{L}_+ \circ \bar{M}_+) &&-
\frac{4}{n^4}(\bar{K} \bar{L})_{++} \bar{M}_{++} \\& - 
\frac{4}{n^4}(\bar{K}\bar{M})_{++}\bar{L}_{++} &&- 
\frac{4}{n^4}(\bar{L}\bar{M})_{++} \bar{K}_{++} \\&+
\frac{4}{n^5}\bar{K}_{++} \bar{L}_{++} \bar{M}_{++} \\
\end{align*}
We denote by $C_{XYZ} = \mathbb{E}_{XYZ}[\bar\phi_X(X)\otimes\bar\phi_Y(Y)\otimes\bar\phi_Z(Z)]$ the population centred covariance operator with empirical estimate $\bar{C}_{XYZ} = \frac{1}{n}\sum_i\bar\phi_X(X_i)\otimes\bar\phi_Y(Y_i)\otimes\bar\phi_Z(Z_i)$. We define similarly the quantities $C_{XY}, C_{YZX}, \ldots$ with corresponding empirical counterparts $\bar{C}_{XY}, \bar{C}_{YZX}, \ldots$ where for example $C_{YZ} = \mathbb{E}_{YZ}[\bar\phi_Y(Y)\otimes\bar\phi_Z(Z)]$

Each of the terms in the above expression for $n\|\hat \mu_L\|^2$ can be expressed as inner products between empirical estimates of population centred covariance operators and tensor products of mean embeddings. Rewriting them as such yields:
\begin{align*}
n\|\hat \mu_L\|^2 &= n\langle \bar{C}_{XYZ},\bar{C}_{XYZ} \rangle \\& -
2n\langle \bar{C}_{XYZ},\bar{C}_{XY}\otimes\bar{\mu}_Z \rangle \\& -
2n\langle \bar{C}_{XZY},\bar{C}_{XZ}\otimes\bar{\mu}_Y \rangle \\& -
2n\langle \bar{C}_{YZX},\bar{C}_{YZ}\otimes\bar{\mu}_X \rangle \\& +
n\langle \bar{C}_{XY}\otimes\bar{\mu}_Z,\bar{C}_{XY}\otimes\bar{\mu}_Z \rangle \\& +
n\langle \bar{C}_{XZ}\otimes\bar{\mu}_Y,\bar{C}_{XZ}\otimes\bar{\mu}_Y \rangle \\& +
n\langle \bar{C}_{YZ}\otimes\bar{\mu}_X,\bar{C}_{YZ}\otimes\bar{\mu}_X \rangle \\& +
2n\langle \bar{\mu}_Z\otimes\bar{C}_{XY},\bar{C}_{ZX}\otimes\bar{\mu}_Y \rangle \\
& \enspace \vdots
\end{align*}
\begin{align*}
&+2n\langle \bar{\mu}_X\otimes\bar{C}_{YZ},\bar{C}_{XY}\otimes\bar{\mu}_Z \rangle \\& +
2n\langle \bar{\mu}_X\otimes\bar{C}_{ZY},\bar{C}_{XZ}\otimes\bar{\mu}_Y \rangle \\& +
4n\langle \bar{C}_{XYZ},\bar{\mu}_X \otimes\bar{\mu}_Y \otimes \bar{\mu}_Z \rangle \\& -
4n\langle \bar{C}_{XY}\otimes \bar{\mu}_Z,\bar{\mu}_X \otimes\bar{\mu}_Y \otimes \bar{\mu}_Z \rangle \\& -
4n\langle \bar{C}_{XZ}\otimes \bar{\mu}_Y,\bar{\mu}_X \otimes\bar{\mu}_Z \otimes \bar{\mu}_Y \rangle \\& -
4n\langle \bar{C}_{YZ}\otimes \bar{\mu}_X,\bar{\mu}_Y \otimes\bar{\mu}_Z \otimes \bar{\mu}_X \rangle \\& +
4n\langle \bar{\mu}_X \otimes\bar{\mu}_Y \otimes \bar{\mu}_Z,\bar{\mu}_X \otimes\bar{\mu}_Y \otimes \bar{\mu}_Z \rangle \\
\end{align*}

\vspace{-0.5cm}
By assumption, $\mathbb{P}_{XYZ} =\mathbb{P}_{XY}\mathbb{P}_{Z}$ and thus the expectation operator also factorises similarly. As a consequence, $C_{XYZ}=0$. Indeed, given any $A \in \mathcal{F}_X \otimes \mathcal{F_Y} \otimes \mathcal{F_Z}$, we can consider $A$ to be a bounded linear operator $\mathcal{F_Z} \longrightarrow\mathcal{F}_X \otimes \mathcal{F_Y} $. It follows that\footnote{We can bring the $\mathbb{E}_Z$ inside the inner product in the penultimate line due to the Bochner integrability of $\bar{\phi}_Z(Z)$, which follows from the conditions required for $\mu_Z$ to exist \citep{steinwart2008support}. }
\begin{align*}
&\mathbb{E}_{XYZ}\langle A, \bar{C}_{XYZ}\rangle \\
&= \frac{1}{n}\sum_i \mathbb{E}_{XY}\mathbb{E}_{Z}\langle A, \bar{\phi}_X(X_i)\otimes \bar{\phi}_Y(Y_i) \otimes \bar{\phi}_Z(Z_i) \rangle\\
&= \frac{1}{n}\sum_i \mathbb{E}_{XY}\mathbb{E}_{Z}\langle\bar{\phi}_X(X_i)\otimes \bar{\phi}_Y(Y_i),  A  \bar{\phi}_Z(Z_i) \rangle_{\mathcal{F}_{X}\otimes \mathcal{F}_Y}\\
&= \frac{1}{n}\sum_i \mathbb{E}_{XY}\langle\bar{\phi}_X(X_i)\otimes \bar{\phi}_Y(Y_i),  A \mathbb{E}_{Z} \bar{\phi}_Z(Z_i) \rangle_{\mathcal{F}_{X}\otimes \mathcal{F}_Y}\\
& = 0\\
\end{align*}

\vspace{-0.9cm}
We conclude that $C_{XYZ} = \mathbb{E}_{XYZ} \bar{C}_{XYZ} = 0$.

Similarly, $C_{XZY}$, $C_{YZX}$, $C_{XZ}$, $C_{YZ}$ are all 0 in their respective Hilbert spaces. Lemma \ref{lemma:beta} tells us that each subprocess of $(X_i,Y_i,Z_i)$ satisfies the same $\beta$-mixing conditions as $(X_i,Y_i,Z_i)$, thus by applying Lemma \ref{lemma:hilbertCLT} it follows that $\|\bar{C}_{XZY}\|$, $\|\bar{C}_{YZX}\|$, $\|\bar{C}_{XZ}\|$, $\|\bar{C}_{YZ}\|$, $\|\bar{\mu}_X\|$, $\|\bar{\mu}_Y\|$, $\|\bar{\mu}_Z\| = O_P\left(n^{-\frac{1}{2}}\right)$. Therefore
\begin{align*}
n\|&\hat \mu_L\|^2  \xrightarrow{O_P(n^{-\frac{1}{2}})} n\langle \bar{C}_{XYZ},\bar{C}_{XYZ} \rangle \\ &-
2n\langle \bar{C}_{XYZ},\bar{C}_{XY}\otimes\bar{\mu}_Z \rangle -
2n\langle \bar{C}_{XZY},\bar{C}_{XZ}\otimes\bar{\mu}_Y \rangle \\ &=
\frac{1}{n}((\bar{K}\circ \bar{L}) \circ \bar{M})_{++}\\& - \frac{2}{n^2}((\bar{K}\circ \bar{L})\bar{M})_{++} + \frac{1}{n^3}(\bar{K}\circ \bar{L})_{++}\bar{M}_{++}
\end{align*}

since all the other terms decay at least as quickly as $O_P(\frac{1}{\sqrt{n}})$. This is shown here for $n\langle \bar{\mu}_X\otimes\bar{C}_{YZ},\bar{C}_{XY}\otimes\bar{\mu}_Z \rangle$; the proofs for the other terms are similar.
\begin{align*}
&n\langle \bar{\mu}_X\otimes\bar{C}_{YZ},\bar{C}_{XY}\otimes\bar{\mu}_Z \rangle \\
&\leq n \| \bar{\mu}_X\otimes\bar{C}_{YZ}\| \|\bar{C}_{XY}\otimes\bar{\mu}_Z \| \\
& = n\sqrt{\langle \bar{\mu}_X\otimes\bar{C}_{YZ} , \bar{\mu}_X\otimes\bar{C}_{YZ} \rangle} \sqrt{\langle \bar{C}_{XY}\otimes\bar{\mu}_Z, \bar{C}_{XY}\otimes\bar{\mu}_Z \rangle} \\
\end{align*}
\begin{align*}
& = n\sqrt{\langle \bar{\mu}_X, \bar{\mu}_X \rangle \langle \bar{C}_{YZ} , \bar{C}_{YZ} \rangle} \sqrt{\langle \bar{C}_{XY}, \bar{C}_{XY} \rangle \langle \bar{\mu}_Z, \bar{\mu}_Z \rangle} \\
& =  n \| \bar{\mu}_X\|\|\bar{C}_{YZ}\| \|\bar{C}_{XY}\|\|\bar{\mu}_Z \| \\
& = n \mathsmaller{O_P\left(\frac{1}{\sqrt{n}}\right)} \mathsmaller{O_P\left(\frac{1}{\sqrt{n}}\right)} O_P(1) \mathsmaller{O_P\left(\frac{1}{\sqrt{n}}\right)} = \mathsmaller{O_P\left(\frac{1}{\sqrt{n}}\right)}
\end{align*}

It can be shown that $\bar{K}\circ \bar{L}$ in the above expression can be replaced with $\overline{\bar{K}\circ \bar{L}}$ while preserving equality. That is, we can equivalently write
\begin{align*}
n\|\Delta_L \hat{P}\|^2 & \longrightarrow \frac{1}{n}((\overline{\bar{K}\circ \bar{L}}) \circ \bar{M})_{++}\\& - \frac{2}{n^2}((\overline{\bar{K}\circ \bar{L}})\bar{M})_{++} + \frac{1}{n^3}(\overline{\bar{K}\circ \bar{L}})_{++}\bar{M}_{++}
\end{align*}
This is equivalent to treating $\bar{k}\otimes\bar{l}$ as a kernel on the single variable $T:=(X,Y)$ and performing another recentering trick as we did at the beginning of this proof. By rewriting the above expression in terms of the operator $\bar{C}_{TZ}$ and mean embeddings $\mu_T$ and $\mu_Z$, it can be shown by a similar argument to before that the latter two terms tend to 0 at least as $O_P(n^{-\frac{1}{2}})$, and thus, substituting for the definition of $\|\hat \mu^{(Z)}_{L,2} \|^2$,
\begin{align*}
 n \|\hat \mu_{L} \|^2 \xrightarrow{O_P(\frac{1}{\sqrt{n}})} n \|\hat \mu^{(Z)}_{L,2} \|^2
\end{align*} as required.
\end{proof}

\begin{proof}(Theorem \ref{theorem:degenerate-kernel})

Note that $\mathbb{E}_{XYZ} = \mathbb{E}_{XY}\mathbb{E}_Z$ under $\mathcal{H}_Z$. Therefore, fixing any $s_j = (x_j,y_j,z_j)$ we have that
\begin{align*}
\mathbb{E}_{S_i}&h(S_i,s_j) = \mathbb{E}_{X_iY_i} \mathbb{E}_{Z_i}\overline{\bar{k}\otimes\bar{l}}\otimes\bar{m} (S_i,s_j) \\
 &=  \langle\mathbb{E}_{X_iY_i}\bar{\phi}(X_i)\otimes\bar{\phi}(Y_i) - C_{XY},\bar{\phi}(x_j)\otimes\bar{\phi}(y_j) - C_{XY}\rangle \\
 &\quad \quad \quad \quad \times \langle \mathbb{E}_{Z_i}\bar{\phi}(Z_i),\bar{\phi}(z_j)\rangle \\
 &=  \langle 0 ,\bar{\phi}(x_j)\otimes\bar{\phi}(y_j) - C_{XY}\rangle \\
 &\quad \quad \quad \quad \times \langle 0 ,\bar{\phi}(z_j)\rangle  = 0
\end{align*}
Therefore $h$ is degenerate. Symmetry follows from the symmetry of the Hilbert space inner product.

For boundedness and Lipschitz continuity, it suffices to show the two following rules for constructing new kernels from old preserve both properties (see Supplementary materials \ref{supp:bounded-and-lipschitz} for proof):
\begin{itemize} \setlength\itemsep{0em}
\item $k \mapsto \bar{k}$ 
\item $(k,l) \mapsto k \otimes l$
\end{itemize}
It then follows that $h = \overline{\bar{k}\otimes\bar{l}}\otimes\bar{m}$ is bounded and Lipschitz continuous since it can be constructed from $k$, $l$ and $m$ using the two above rules.
\end{proof}

\newpage
\section*{References}
\bibliography{ref}

\appendix
\onecolumn

\section{SUPPLEMENTARY MATERIAL}
This supplementary section contains proofs omitted from the main paper and includes a proof that the HSIC statistic asymptotically satisfies the hypothesis of the Wild Bootstrap.

\subsection{HILBERT SPACE RANDOM VARIABLE CLT}\label{supp:hilbert-clt}

In this paper we exploit a Central Limit Theorem for Hilbert space valued random variables that are functions of random processes \citep{dehling2015bootstrap}. One of the conditions required to apply this theorem concerns appropriate $\beta$-mixing of the underlying processes. This theorem is used as a black-box, and it is hoped by the authors that as further theorems concerning CLT-properties of Hilbert space random variables are developed, the conditions required of the processes may be weakened.
\begin{proof}(Lemma \ref{lemma:hilbertCLT})
We exploit Theorem 1.1 from \citet{dehling2015bootstrap}. Using the language of this paper, $\bar{\phi}(X_i)$ is a 1-approximating functional of $(X_i)_i$, following straightforwardly from the definition of 1-approximating functionals given. 

Since our kernels are bounded, $\exists C: \enspace \|\bar{\phi}(X_i)\| < C $ and so \[\mathbb{E}\|\bar{\phi}(X_1)\|^{2+\delta} <C^{2+\delta}< \infty \enspace \forall \delta>0\]
Thus condition (1) is satisfied.

We can take $f_m = \bar{\phi}(X_0)\enspace \forall m$ and so achieve $a_m= 0 \enspace \forall m$, thus condition (2) is satisfied.

By assumption on the time series, condition (3) is satisfied.

Thus, by Theorem 1.1 in \citet{dehling2015bootstrap}
\[\sqrt{n} (\tilde{\mu}_{X} - \mu_{X}) \widesim[2]{n\longrightarrow\infty} N\]
where $N$ is a Hilbert space valued Gaussian random variable and convergence is in distribution. Thus 
\[\|\tilde{\mu}_{X} - \mu_{X}\| = O_P(\frac{1}{\sqrt{n}})\]
\end{proof}

\subsection{SUB-PROCESSES OF $\beta$-MIXING PROCESSES ARE $\beta$-MIXING}\label{supp:lemma-beta}
\begin{lemma}\label{lemma:beta}
Suppose that the process $(X_t,Y_t,Z_t)_t$ is $\beta$-mixing. Then any `sub-process' is also $\beta$-mixing (for example $(X_t,Y_t)_t$ or $(X_t)_t$)
\end{lemma}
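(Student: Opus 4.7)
The plan is to reduce this to a simple observation about $\sigma$-algebras: the natural filtration of any coordinate sub-process is contained in the natural filtration of the full joint process, so every partition eligible for computing the sub-process $\beta$-coefficient is also eligible for the full-process $\beta$-coefficient, which forces the sub-process mixing coefficient to be no larger.

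Concretely, write $W_t = (X_t, Y_t, Z_t)$ and let $\pi$ denote any coordinate projection, so that $U_t := \pi(W_t)$ is the sub-process in question (for example $U_t = X_t$ or $U_t = (X_t, Y_t)$). Define the natural filtrations $\mathcal{F}_b^c = \sigma(W_b, \ldots, W_c)$ for the full process and $\mathcal{G}_b^c = \sigma(U_b, \ldots, U_c)$ for the sub-process. The first step is to observe that, since $\pi$ is a measurable (coordinate projection) map, each $U_t$ is $\mathcal{F}_t^t$-measurable, and therefore $\mathcal{G}_b^c \subseteq \mathcal{F}_b^c$ for all $b \le c$ (with the analogous containment for one-sided tails used in the definition).

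The second step is to transfer this containment to the $\beta$-coefficients. Fix $m$ and $n$. Any finite partition $\{A_1, \ldots, A_I\}$ of the sample space with $A_i \in \mathcal{G}_1^n$ automatically satisfies $A_i \in \mathcal{F}_1^n$, and similarly for $\{B_1, \ldots, B_J\}$ with $B_j \in \mathcal{G}_{n+m}^\infty \subseteq \mathcal{F}_{n+m}^\infty$. Consequently the feasible set for the inner supremum in the definition of $\beta_U(m)$ is a subset of the feasible set for $\beta_W(m)$, so
\[
\beta_U(m) \;\le\; \beta_W(m) \qquad \text{for every } m.
\]
Letting $m \to \infty$ and using the assumption that $\beta_W(m) \to 0$ yields $\beta_U(m) \to 0$, i.e.\ the sub-process is $\beta$-mixing. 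As a bonus, any decay rate imposed on $\beta_W$ (such as the $o(m^{-6})$ rate required by the suitable mixing assumptions) is inherited verbatim by $\beta_U$, which is what the rest of the paper actually needs when invoking this lemma.

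There is no real obstacle in this argument: the whole content is the measurability of coordinate projections and the monotonicity of suprema under shrinking the feasible set. The only thing to be slightly careful about is to spell out the containment for the two-sided and one-sided tail $\sigma$-algebras used in the definition of $\beta$-mixing, but these follow from $\mathcal{G}_b^c \subseteq \mathcal{F}_b^c$ by taking unions.
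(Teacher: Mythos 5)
Your proposal is correct and takes essentially the same approach as the paper: both arguments rest on the observation that the $\sigma$-algebras generated by the sub-process are contained in those generated by the full process (the paper phrases this via the identification $A \mapsto A\times\mathcal{Z}$ on path space, you phrase it via $\mathcal{G}_b^c \subseteq \mathcal{F}_b^c$ on the underlying sample space), so the supremum defining $\beta_U(m)$ ranges over a smaller feasible set and hence $\beta_U(m) \le \beta_W(m)$.
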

\begin{proof}(Lemma \ref{lemma:beta}) 

Let us consider $(X_t,Y_t)_t$.
Let us call $\beta_{XYZ}(m)$ the coefficients for the process $(X_t,Y_t,Z_t)_t$, and $\beta_{XY}(m)$ the coefficients for the process $(X_t,Y_t)_t$. 

Observe that for $A \in \sigma((X_b,Y_b),\ldots, (X_c,Y_c))$, it is the case that $A \times \mathcal{Z} \in \sigma((X_b,Y_b,Z_b),\ldots, (X_c,Y_c,Z_c))$ and $\mathbb{P}_{XY}(A) = \mathbb{P}_{XYZ}(A\times \mathcal{Z})$.

Thus

\begin{align*}
\beta_{XY}(m) &= \frac{1}{2} \sup_n \sup_{ \{A_i^{XY} \}, \{B_j^{XY} \} } \sum_{i=1}^I \sum_{j=1}^J | \mathbb{P}_{XY}(A_i^{XY} \cap B_j^{XY}) - \mathbb{P}_{XYZ}(A_i^{XY})\mathbb{P}_{XYZ}(B_j^{XY})| \\
&= \frac{1}{2} \sup_n \sup_{ \{A_i^{XY} \}, \{B_j^{XY} \} } \sum_{i=1}^I \sum_{j=1}^J | \mathbb{P}_{XYZ}((A_i^{XY}\times \mathcal{Z}) \cap (B_j^{XY} \times \mathcal{Z})) \\& \quad \quad\quad \quad \quad \quad\quad \quad \quad \quad\quad \quad- \mathbb{P}_{XYZ}(A_i^{XY}\times \mathcal{Z})\mathbb{P}_{XYZ}(B_j^{XY} \times \mathcal{Z})| \\
& \leq \frac{1}{2} \sup_n \sup_{ \{A_i^{XYZ} \}, \{B_j^{XYZ} \} } \sum_{i=1}^I \sum_{j=1}^J | \mathbb{P}_{XYZ}(A_i^{XYZ} \cap B_j^{XYZ}) - \mathbb{P}_{XYZ}(A_i^{XYZ})\mathbb{P}_{XYZ}(B_j^{XYZ})| \\
& = \beta_{XYZ}(m)
\end{align*}

Thus we have shown that  $\beta_{XYZ}(m) \longrightarrow 0 \implies \beta_{XY}(m) \longrightarrow 0$. That is, if  $(X_t,Y_t,Z_t)_t$ is $\beta$-mixing then so is  $(X_t,Y_t)_t$ 

A similar argument holds for any other sub-process.
\end{proof}

\subsection{CONTROL OF TYPE I ERROR}\label{supp:quantile-proof}
Theorem \ref{theorem:quantiles-converge} shows that the quantiles of the bootstrapped statistic $nV_b$ (which we can estimate by drawing a large number of samples) converge to those of the test statistic $\|\hat \mu_L\|^2$ under the null hypothesis. Therefore, we can estimate rejection thresholds to appropriately control Type I error.

\begin{proof}(Theorem \ref{theorem:quantiles-converge})

We use Theorem 3.1 from \citet{leucht2013dependent}. By assumption, condition (B2) is satisfied by the random matrix $W$. (A2) is satisfied due to Theorem \ref{theorem:degenerate-kernel}. (B1) is satisfied due to the suitable mixing assumptions.

Therefore, Theorem 3.1 implies that $nV_b$ converges in probability to the null distribution of $n\|\hat \mu_{L,2}^{(Z)}\|^2$. Since $n\|\mu_L\|^2$ also converges in probability to $n\|\hat \mu_{L,2}^{(Z)}\|^2$, it follows that $nV_b$ converges to $n\|\mu_L\|^2$ in probability, and thus also in distribution. Convergence in distribution implies that the quantiles converge.
\end{proof}

\subsection{SEMI-CONSISTENCY}\label{supp:consistent}
Theorem \ref{theorem:consistent} provides a consistency result: if $\Delta_LP\not = 0$, then we correctly reject $\mathcal{H}_0$ with probability 1 in the limit $n\longrightarrow\infty$.

\begin{proof}
By Theorem 2 from \citet{chwialkowski2014wild}, $nV_b$ converges to some random variable with finite variance, while $n\|\hat{\mu}_L\|^2 \longrightarrow \infty$. Thus if $Q_\alpha$ is the $\alpha$-quantile of $nV_b$, then $P(n\|\hat{\mu}_L\|^2 > Q_\alpha) \longrightarrow 1 $ for any $\alpha$.
\end{proof}

\subsection{PROOF THAT BOUNDEDNESS AND LIPSCHITZ CONTINUITY IS PRESERVED}\label{supp:bounded-and-lipschitz}
Recall that a kernel $k$ defined on $\mathcal{X}$ is Lipschitz continuous iff $\exists C_k : \forall w \enspace |k(x,w) - k(x',w)| \leq C_k d_\mathcal{X}(x,x')$ where $d_\mathcal{X}$ is the metric on $\mathcal{X}$ with respect to which $k$ is Lipschitz continuous.

\begin{claim}
$k$ bounded and Lipschitz continuous $\implies$ $\bar{k}$ is bounded and Lipschitz continuous
\end{claim}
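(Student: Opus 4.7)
The plan is to expand $\bar{k}$ into a concrete closed-form expression involving $k$ and two expectations, and then obtain both boundedness and Lipschitz continuity by direct estimation.

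First, I would write $\bar{k}$ in a more tractable form. Using the reproducing property and the definition $\mu_X = \mathbb{E}_X k(X,\cdot)$, a direct expansion of the inner product gives
\[
\bar{k}(x,x') = k(x,x') - \mu(x) - \mu(x') + c,
\]
where I set $\mu(y) := \mathbb{E}_X k(X,y)$ and $c := \mathbb{E}_{X,X'} k(X,X')$. Boundedness is then immediate: if $|k| \leq M$, then $|\mu(y)| \leq M$ for all $y$, $|c| \leq M$, and the triangle inequality gives $|\bar{k}(x,x')| \leq 4M$.

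For Lipschitz continuity, I would fix an arbitrary $w \in \mathcal{X}$ and compute
\[
\bar{k}(x,w) - \bar{k}(x',w) = \bigl[k(x,w) - k(x',w)\bigr] - \bigl[\mu(x) - \mu(x')\bigr],
\]
since the $\mu(w)$ and $c$ contributions cancel. The first bracket is bounded by $C_k \, d_\mathcal{X}(x,x')$ by the Lipschitz assumption on $k$. For the second, I would pull the difference inside the expectation, use the triangle inequality under the integral, and apply Lipschitz continuity of $k$ in its first argument (which follows from symmetry and the Lipschitz continuity in the second argument) to get $|\mu(x) - \mu(x')| \leq \mathbb{E}_X |k(X,x) - k(X,x')| \leq C_k \, d_\mathcal{X}(x,x')$. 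Combining these yields $|\bar{k}(x,w) - \bar{k}(x',w)| \leq 2C_k \, d_\mathcal{X}(x,x')$, uniformly in $w$, as required.

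The only mild subtlety is justifying the interchange of expectation and absolute value for $\mu(x) - \mu(x')$, which is standard once one knows that $k(X,\cdot)$ is Bochner integrable; this is already implicit in the paper's assumption that $\mu_X$ exists as a well-defined element of $\mathcal{F}_k$. Beyond that, the argument is entirely elementary, and no further obstacle arises.
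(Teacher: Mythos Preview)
Your proposal is correct and follows essentially the same approach as the paper: expand $\bar{k}$ into its four constituent terms, bound each by $M$ for boundedness, and observe that in the difference $\bar{k}(x,w)-\bar{k}(x',w)$ the $w$-dependent and constant terms cancel, leaving two pieces each controlled by the Lipschitz constant to yield the bound $2C_k\,d_{\mathcal{X}}(x,x')$. The only cosmetic distinction is that the paper writes the surviving expectation term as $\mathbb{E}_W[k(x,W)-k(x',W)]$ (difference in the first slot, so the Lipschitz hypothesis applies directly), whereas you write it as $\mu(x)-\mu(x')=\mathbb{E}_X[k(X,x)-k(X,x')]$ and then invoke symmetry of $k$ --- but this is the same estimate.
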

\begin{proof}
$k$ bounded implies there exists $B_k$ such that $|k(x,w)|\leq B_k$ $\forall x, w \in \mathcal{X}$. It follows that

\begin{align*}
|\bar{k}(x,w)| & = |k(x,w) - \mathbb{E}_X[k(X,w)] - \mathbb{E}_W[k(x,W)] + \mathbb{E}_{XW}[k(X,W)]| \\
& \leq |k(x,w)|  + \mathbb{E}_X |k(X,w)| + \mathbb{E}_W|k(x,W)| + \mathbb{E}_{XW}|k(X,W)| \\
& \leq 4B_k \\
\end{align*}

And thus $\bar{k}$ is bounded. For Lipschitz continuity, observe that for any $w \in \mathcal{X}$

\begin{align*}
|\bar{k}(x,w) - \bar{k}(x',w)| & = |k(x,w) - \mathbb{E}_X[k(X,w)] - \mathbb{E}_W[k(x,W)] + \mathbb{E}_{XW}[k(X,W)] \\
& \quad \quad - k(x',w) + \mathbb{E}_X[k(X,w)] + \mathbb{E}_W[k(x',W)] - \mathbb{E}_{XW}[k(X,W)] |\\
& = |k(x,w)- k(x',w)  + \mathbb{E}_W[k(x',W)] - \mathbb{E}_W[k(x,W)] |\\
&\leq |k(x,w)- k(x',w)| + |\mathbb{E}_W[k(x',W)] - \mathbb{E}_W[k(x,W)] | \\
&\leq|k(x,w)- k(x',w)| + \mathbb{E}_W|k(x',W) - k(x,W) | \\
&\leq 2C_k d_\mathcal{X}(x,x')
\end{align*}

and thus $\bar{k}$ is Lipschitz continuous.

\end{proof}

\begin{claim}
$k$ and $l$ bounded and Lipschitz continuous with respect to the metrics $d_\mathcal{X}$ and $d_\mathcal{Y}$ respectively $\implies$ $k\otimes l$ is bounded and Lipschitz continuous with respect to any metric on $\mathcal{X}\times \mathcal{Y}$ equivalent to $d \left( (x,y),(x',y') \right) =  d_\mathcal{X}(x,x') + d_\mathcal{Y}(y,y')$
\end{claim}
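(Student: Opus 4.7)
The plan is to treat boundedness and Lipschitz continuity separately, with boundedness being essentially immediate from the definition of the tensor product of kernels, and Lipschitz continuity requiring a standard add-and-subtract trick that crucially uses the fact that both $k$ and $l$ are bounded as well as Lipschitz.

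For boundedness, I would simply note that $(k \otimes l)((x,y),(w,w')) = k(x,w) \, l(y,w')$, so if $B_k$ and $B_l$ are uniform bounds on $|k|$ and $|l|$ respectively (inherited from the hypothesis), then $|k \otimes l|$ is uniformly bounded by $B_k B_l$.

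For Lipschitz continuity with respect to $d((x,y),(x',y')) = d_\mathcal{X}(x,x') + d_\mathcal{Y}(y,y')$, I would fix an arbitrary second argument $(w,w') \in \mathcal{X}\times\mathcal{Y}$ and estimate
\[
|k(x,w)l(y,w') - k(x',w)l(y',w')|
\]
by inserting the intermediate term $k(x',w)l(y,w')$, so that the difference splits as
\[
|l(y,w')| \cdot |k(x,w) - k(x',w)| + |k(x',w)| \cdot |l(y,w') - l(y',w')|.
\]
Each summand now consists of a bounded factor (controlled by $B_k$ or $B_l$) times a Lipschitz factor (controlled by $C_k d_\mathcal{X}(x,x')$ or $C_l d_\mathcal{Y}(y,y')$), giving the upper bound $B_l C_k d_\mathcal{X}(x,x') + B_k C_l d_\mathcal{Y}(y,y') \leq C \cdot d((x,y),(x',y'))$ with $C = \max(B_l C_k, B_k C_l)$. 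Transfer to any equivalent metric $d'$ on $\mathcal{X}\times\mathcal{Y}$ is immediate: if $d \leq \lambda d'$ for some $\lambda>0$, the Lipschitz constant inflates to $\lambda C$.

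I do not anticipate any genuine obstacle. The only subtlety worth flagging is that the argument truly requires both boundedness \emph{and} Lipschitz continuity of each factor, not just Lipschitz continuity alone: the product of two merely Lipschitz functions on an unbounded domain need not be Lipschitz, so the bounds $B_k, B_l$ are doing real work when we split the difference. Combined with the previous claim (that $k \mapsto \bar k$ preserves both properties), this gives the desired closure properties needed to conclude that $h = \overline{\bar k \otimes \bar l} \otimes \bar m$ is bounded and Lipschitz continuous, completing the proof of Theorem \ref{theorem:degenerate-kernel}.
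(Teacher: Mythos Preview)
Your proposal is correct and follows essentially the same route as the paper: the same product bound $B_kB_l$ for boundedness, and the same add-and-subtract splitting of $k(x,w)l(y,w') - k(x',w)l(y',w')$ to obtain the Lipschitz constant $\max(B_lC_k,\,B_kC_l)$ with respect to $d$. Your explicit remark about transferring to an equivalent metric via a multiplicative constant, and your observation that boundedness of both factors is genuinely needed, are small additions the paper does not spell out but are entirely in keeping with its argument.
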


Note that all norms on finite dimensional vector spaces are equivalent, and so if $\mathcal{X}$ and $\mathcal{Y}$ are finite dimensional vector spaces then $k\otimes l$ is Lipschitz continuous with respect to \emph{any} norm on $\mathcal{X}\times \mathcal{Y}$

\begin{proof} Let $k$ and $l$ be bounded by $B_k$ and $B_l$ respectively. Then 

\begin{align*}
|k \otimes l \left( (x,y), (w,z) \right)| &= |k(x,w)l(y,z)| \\
&= |k(x,w)||l(y,z)| \\
&\leq B_k B_l \\
\end{align*}

Let $k$ and $l$ have Lipschitz constants $C_k$ and $C_l$ respectively. Then, for any $(w,z) \in \mathcal{X\times Y}$

\begin{align*}
|k \otimes l \left( (x,y), (w,z) \right) &- k \otimes l \left( (x',y'), (w,z) \right) |  \\
& = |k(x,w)l(y,z) - k(x',w)l(y',z)| \\
& = |k(x,w)l(y,z) - k(x',w)l(y,z) + k(x',w)l(y,z) - k(x',w)l(y',z)| \\
& \leq |l(y,z)| |k(x,w) - k(x',w)| + |k(x',w)||l(y,z) - l(y',z)| \\
& \leq B_l C_k d_\mathcal{X}(x,x') + B_k C_l d_\mathcal{Y}(y,y') \\
& \leq \max(B_l C_k, B_k C_l )  \enspace d\left((x,y),(x',y')\right) 
\end{align*}

\end{proof}

\subsection{PROOF THAT HSIC CAN BE WILD BOOTSTRAPPED}

Given samples $\{(X_i,Y_i)\}_{i=1}^n$, and taking all notation involving kernels and base spaces as before, the HSIC statistic is defined to be the squared RKHS distance between the empirical embeddings of the distributions $\mathbb{P}_{XY}$ and $\mathbb{P}_X\mathbb{P}_Y$:

\begin{align*}
HSIC_b & = \| \frac{1}{n}\sum_i \phi_X(X_i) \otimes \phi_Y(Y_i) - \left(\frac{1}{n}\sum_i \phi_X(X_i)\right) \otimes \left(\frac{1}{n}\sum_i \phi_Y(Y_i) \right)\|^2 \\
& = \frac{1}{n^2} (K\circ L)_{++}  - \frac{2}{n^3}(KL)_{++} + \frac{1}{n^4}K_{++}L_{++} \\
& = \frac{1}{n^2}(\tilde{K}\circ \tilde{L})_{++}
\end{align*}
where the last equality can be shown easily by expanding $\tilde{K}$ (and $\tilde{L}$ similarly) as
\begin{align*}
\tilde{K}_{ij} &= \langle\phi_X(X_i)- \frac{1}{n}\sum_k\phi_X(X_k),\phi_X(X_j) - \frac{1}{n}\sum_k\phi_X(X_k)\rangle \\
&= K_{ij} - \frac{1}{n}\sum_kK_{ik} - \frac{1}{n}\sum_kK_{jk} + \frac{1}{n^2}\sum_{kl}K_{kl}
\end{align*} 

\begin{theorem}\label{theorem:HSIC-conv-in-prob} Suppose that $(X_i,Y_i)_{i=1}^n$ are drawn from a process that is $\beta$-mixing with coefficients $\beta(m)$ satisfying $\sum_{m=1}^{\infty}\beta(m)^{\frac{\delta}{2+\delta}}<\infty$ for some $\delta>0$. Under $\mathcal{H}_0 = \{ \mathbb{P}_{XY} = \mathbb{P}_X\mathbb{P}_Y\}$, $\lim_{n \to \infty} ( nHSIC_b - \frac{1}{n} (\bar{K}\circ \bar{L})_{++} ) =0 $ in probability.
\end{theorem}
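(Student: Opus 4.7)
The plan is to mirror the strategy used in the proof of Theorem \ref{theorem:norm-conv-in-prob}, but in the considerably simpler two-variable setting. The key observation, already exploited in that proof, is the identity $\phi_X(X_i) - \tilde\mu_X = \bar\phi_X(X_i) - \bar\mu_X$ where $\bar\mu_X := \tilde\mu_X - \mu_X$ is the empirical mean of the population-centred features $\bar\phi_X(X_i)$. This rewrites $\tilde K_{ij} = \langle \bar\phi_X(X_i) - \bar\mu_X, \bar\phi_X(X_j) - \bar\mu_X\rangle$, and analogously for $\tilde L_{ij}$. Since the product of two Hilbert inner products is the inner product of the corresponding tensors, I would obtain the compact expression
$$nHSIC_b = \frac{1}{n}(\tilde K \circ \tilde L)_{++} = n\left\|\bar C_{XY} - \bar\mu_X \otimes \bar\mu_Y\right\|^2,$$
where $\bar C_{XY} := \frac{1}{n}\sum_i \bar\phi_X(X_i) \otimes \bar\phi_Y(Y_i)$ is the empirical population-centred cross-covariance operator. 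The cross terms in $\|\frac{1}{n}\sum_i (\bar\phi_X(X_i) - \bar\mu_X)\otimes (\bar\phi_Y(Y_i) - \bar\mu_Y)\|^2$ collapse to $\bar\mu_X \otimes \bar\mu_Y$ because the $\bar\mu_X, \bar\mu_Y$ pull out of the sum.

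The next step is to expand the squared norm into three pieces,
$$nHSIC_b = n\|\bar C_{XY}\|^2 - 2n\langle \bar C_{XY}, \bar\mu_X \otimes \bar\mu_Y\rangle + n\|\bar\mu_X\|^2\|\bar\mu_Y\|^2,$$
and to argue that under $\mathcal{H}_0$ only the first survives. Independence gives the Bochner-integrability calculation $C_{XY} = \mathbb{E}[\bar\phi_X(X) \otimes \bar\phi_Y(Y)] = \mathbb{E}\bar\phi_X(X) \otimes \mathbb{E}\bar\phi_Y(Y) = 0$, exactly as in the Lancaster proof. Therefore $\bar C_{XY}$, $\bar\mu_X$ and $\bar\mu_Y$ are all empirical means of bounded Hilbert-valued quantities with population mean zero. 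Applying Lemma \ref{lemma:hilbertCLT} to the subprocesses $(X_t)$, $(Y_t)$ (via Lemma \ref{lemma:beta} to inherit $\beta$-mixing) and to $(X_t, Y_t)$ equipped with the bounded product kernel $k \otimes l$ yields $\|\bar C_{XY}\|, \|\bar\mu_X\|, \|\bar\mu_Y\| = O_P(n^{-1/2})$. Cauchy--Schwarz then controls the cross term by $n \cdot O_P(n^{-1/2})^3 = O_P(n^{-1/2})$, and the fourth-order term is trivially $O_P(n^{-1})$; both vanish in probability.

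Finally, the surviving piece rewrites directly as the claimed right-hand side by unpacking the tensor inner product:
$$n\|\bar C_{XY}\|^2 = \frac{1}{n}\sum_{i,j}\langle \bar\phi_X(X_i), \bar\phi_X(X_j)\rangle\langle \bar\phi_Y(Y_i), \bar\phi_Y(Y_j)\rangle = \frac{1}{n}(\bar K \circ \bar L)_{++}.$$
Combining these facts gives $nHSIC_b - \frac{1}{n}(\bar K \circ \bar L)_{++} = O_P(n^{-1/2}) \to 0$ in probability.

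The main obstacle, such as it is, is bookkeeping around the mixing hypothesis: Theorem \ref{theorem:HSIC-conv-in-prob} is stated under the summability condition $\sum_m \beta(m)^{\delta/(2+\delta)} < \infty$ rather than the stronger $\beta(m) = o(m^{-6})$ used elsewhere in the paper. I would need to verify that the Dehling--Fried CLT underlying Lemma \ref{lemma:hilbertCLT} applies under this weaker hypothesis (it does, since that is the native form in \citet{dehling2015bootstrap}), and that Lemma \ref{lemma:beta} transfers the property to the subprocesses $(X_t)$, $(Y_t)$ and to the kernel-embedded product process. Once these checks are in place, the structure of the argument is substantially simpler than the Lancaster case because only two tensor slots are involved, so the post-expansion term count is a manageable three rather than sixteen.
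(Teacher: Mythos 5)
Your proposal is correct and mirrors the paper's own proof: both reduce $nHSIC_b$ to $n\|\bar{C}_{XY} - \bar{\mu}_X \otimes \bar{\mu}_Y\|^2$ via the recentering identity, expand the squared norm into three terms, invoke the degeneracy $C_{XY}=0$ under $\mathcal{H}_0$ through the Bochner-integrability argument, and dispose of the cross and fourth-order terms with Lemma \ref{lemma:hilbertCLT} and Cauchy--Schwarz. Your observation that the stated mixing hypothesis $\sum_m \beta(m)^{\delta/(2+\delta)}<\infty$ is itself the native condition of the CLT in \citet{dehling2015bootstrap} (and is weaker than the $\beta(m)=o(m^{-6})$ used elsewhere) is a worthwhile clarification that the paper's proof leaves implicit when it invokes Lemma \ref{lemma:hilbertCLT}.
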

Similar to the case with the Lancaster statistic, $\frac{1}{n} (\bar{K}\circ \bar{L})_{++}$ is much easier to study than $nHSIC_b$ under the non-\emph{i.i.d.}~assumption. It can be written as a normalised $V$-statistic as:
 
\[ 
nV_n = \frac{1}{n} \mathlarger{\sum}_{1\leq i,j \leq n} \bar{k} \otimes \bar{l}(S_i,S_j)
\]
where  $S_i = (X_i,Y_i)$. Again, the crucial observation is that
\[
 h = \bar{k} \otimes \bar{l}
\]
is well behaved in the following sense 
\begin{theorem}\label{theorem:HSIC-degenerate-kernel}
Suppose that $k$ and $l$ are bounded symmetric Lipschitz contentious kernels. Then $h$ is also bounded symmetric and Lipschitz continuous, which is moreover degenerate under $\mathcal{H}_0$.
\end{theorem}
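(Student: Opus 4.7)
The plan is to decompose the theorem into its four assertions---symmetry, boundedness, Lipschitz continuity, and degeneracy under $\mathcal{H}_0$---and handle each independently, recycling the machinery already assembled in Supplementary material \ref{supp:bounded-and-lipschitz}. Symmetry is essentially automatic: both $\bar{k}$ and $\bar{l}$ inherit symmetry from $k$ and $l$ because they are defined via inner products $\langle \phi(\cdot) - \mu, \phi(\cdot) - \mu \rangle$, and the tensor product of symmetric kernels is symmetric. This takes a single line.

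For boundedness and Lipschitz continuity, the two claims already proved in Supplementary material \ref{supp:bounded-and-lipschitz} are exactly what is needed. First I would apply the ``$k \mapsto \bar{k}$'' rule to conclude that $\bar{k}$ and $\bar{l}$ are bounded and Lipschitz continuous (on $\mathcal{X}$ and $\mathcal{Y}$ respectively). Then I would apply the ``$(k,l) \mapsto k \otimes l$'' rule to conclude that $\bar{k} \otimes \bar{l}$ is bounded and Lipschitz continuous on $\mathcal{X} \times \mathcal{Y}$ with respect to any metric equivalent to the sum metric. Thus $h$ inherits all three structural properties from $k$ and $l$ by applying the two claims in sequence, in direct analogy with the proof of Theorem \ref{theorem:degenerate-kernel}.

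The only assertion requiring a genuine (if short) computation is degeneracy. For a fixed $s = (x,y)$ and $S_1 = (X_1, Y_1)$, I would expand
\[
\mathbb{E}_{S_1} h(S_1, s) = \mathbb{E}_{X_1 Y_1}\left[ \bar{k}(X_1, x)\,\bar{l}(Y_1, y) \right].
\]
Under $\mathcal{H}_0$ the joint factorises, so the expectation splits as $\mathbb{E}_{X_1}\bar{k}(X_1,x) \cdot \mathbb{E}_{Y_1}\bar{l}(Y_1,y)$. Writing each factor as an inner product of the form $\langle \mathbb{E}_{X_1}\bar{\phi}_X(X_1),\, \bar{\phi}_X(x) \rangle$, I would note that $\mathbb{E}_{X_1}\bar{\phi}_X(X_1) = \mathbb{E}_{X_1}\phi_X(X_1) - \mu_X = 0$ by the definition of the mean embedding, so each factor vanishes individually. (In fact only one needs to vanish.) This mirrors the structure of the proof of Theorem \ref{theorem:degenerate-kernel} but is strictly simpler since there is no outer recentering to account for.

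There is no real obstacle here: the theorem is a direct corollary of the two preservation claims together with a one-line use of independence under the null. The statement appears in the appendix because the subsequent application to HSIC needs exactly these hypotheses in order to invoke the wild bootstrap machinery of \citet{leucht2013dependent}, not because the proof itself requires new ideas beyond those already used for the Lancaster kernel.
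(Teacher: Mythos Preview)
Your proposal is correct and follows essentially the same approach as the paper: symmetry is inherited, boundedness and Lipschitz continuity are obtained by chaining the two claims in Supplementary material \ref{supp:bounded-and-lipschitz}, and degeneracy is shown by factorising the expectation under $\mathcal{H}_0$ and using $\mathbb{E}_X\bar{\phi}_X(X)=0$. The paper's proof is terser but structurally identical.
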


Together, Theorems \ref{theorem:HSIC-conv-in-prob} and \ref{theorem:HSIC-degenerate-kernel} justify use of the Wild Bootstrap in estimating the quantiles of the null distribution of the test statistic $nHSIC_b$.

\begin{proof}(Theorem \ref{theorem:HSIC-conv-in-prob})
We can equivalently write $HSIC_b$ as the norm of the empirically centred covariance operator, which is invariant to population centering the feature maps:

\begin{align*}
HSIC_b &= \left\| \frac{1}{n}\sum_i\left( \phi_X(X_i) - \frac{1}{n}\sum_j \phi_X(X_j) \right) \otimes \frac{1}{n}\sum_i\left( \phi_Y(Y_i) - \frac{1}{n}\sum_j \phi_Y(Y_j) \right)\right\|^2 \\
& = \left\| \frac{1}{n}\sum_i\left( \bar\phi_X(X_i) - \frac{1}{n}\sum_j \bar\phi_X(X_j) \right) \otimes \frac{1}{n}\sum_i\left( \bar\phi_Y(Y_i) - \frac{1}{n}\sum_j \bar\phi_Y(Y_j) \right)\right\|^2 \\
\end{align*} 

Expanding this, we can rewrite the above in terms of inner products involving the population centred covariance operator and the population centred mean embeddings:

\begin{align*}
nHSIC_b &= n\| \bar{C}_{XY} \|^2 - 2n \langle \bar{C}_{XY} , \bar{\mu}_X \otimes \bar{\mu}_Y \rangle + n\| \bar{\mu}_X \otimes \bar{\mu}_Y  \|^2
\end{align*} 

The first term in this expression can be written as $n\| \bar{C}_{XY} \|^2 = \frac{1}{n}\sum_{ij}\bar{k}(X_i,X_j)\bar{l}(Y_i,Y_j)= \frac{1}{n}\sum_{ij} h(S_i,S_j)$. We show that the remaining two terms decay to zero in probability.

\pagebreak
By assumption, $\mathbb{P}_{XY} = \mathbb{P}_X\mathbb{P}_Y$ and thus the expectation operator factorises similarly. Therefore, for any $A\in HS(\mathcal{F}_Y,\mathcal{F}_X)$, 
\begin{align*}
\mathbb{E}_{XY} \langle A , \bar{C}_{XY} \rangle & = \frac{1}{n}\sum_i \mathbb{E}_{X}\mathbb{E}_{Y} \langle A , \left( \phi_X(X_i) - \mu_X \right) \otimes \left( \phi_Y(Y_i) - \mu_Y \right)\rangle_{HS} \\
& = \frac{1}{n}\sum_i \mathbb{E}_{X}\mathbb{E}_{Y}\langle \phi_X(X_i) - \mu_X  ,  A \left( \phi_Y(Y_i) - \mu_Y \right) \rangle_{\mathcal{F}_X} \\
& = \frac{1}{n}\sum_i \mathbb{E}_{Y}\langle \mathbb{E}_{X}\left(\phi_X(X_i) - \mu_X\right)  ,  A \left( \phi_Y(Y_i) - \mu_Y \right) \rangle_{\mathcal{F}_X} \\
& = 0
\end{align*}

where the commutativity of $\mathbb{E}_X$ with the inner product in the penultimate line follows from the Bochner integrability of the quantity $\phi_X(X) - \mu_X$, which in turn follows from the conditions under which $\mu_{X}$ exists \citep{steinwart2008support}. It follows that $\mathbb{E}_{XY} \bar{C}_{XY} = 0$.

Thus by Lemma \ref{lemma:hilbertCLT} as before, it follows that $\|\bar{C}_{XY}\|, \|\bar{\mu}_X\|, \|\bar{\mu}_Y\| = O_P(n^{-\frac{1}{2}})$. 

It thus follows that the two latter quantities in the above expression for $nHSIC_b$ decay to $0$ in probability.

\begin{align*}
n\langle \bar{C}_{XY},\bar{\mu}_X  \otimes \bar{\mu}_Y \rangle &\leq n \|\bar{C}_{XY}\|\|\bar{\mu}_X\| \| \bar{\mu}_Y\|\\
&= O_P(n^{-\frac{1}{2}})
\end{align*}

\begin{align*}
\| \bar{\mu}_X \otimes \bar{\mu}_Y \|^2 & = n\|\bar{\mu}_X \|^2 \|\bar{\mu}_Y \|^2\\ &= nO_P(n^{-2}) \\&= O_P(n^{-1})
\end{align*}

It follows that $nHSIC_b  \xrightarrow{O(n^{-\frac{1}{2}})} = n\|\bar{C}_{XY}\|^2 = \frac{1}{n} (\bar{K}\circ \bar{L})_{++}$, as required.
\end{proof}

\begin{proof}(Theorem \ref{theorem:HSIC-degenerate-kernel})

To show degeneracy, fix any $s_i$ and observe that 
\begin{align*}
\mathbb{E}_{S}h(s_i,S) &= \mathbb{E}_X\mathbb{E}_Y \langle\bar{\phi}(x_i),\bar{\phi}(X)\rangle \langle\bar{\phi}(y_i),\bar{\phi}(Y)\rangle \\
&= \langle\bar{\phi}(x_i),\mathbb{E}_X\bar{\phi}(X)\rangle \langle\bar{\phi}(y_i),\mathbb{E}_Y\bar{\phi}(Y)\rangle \\
&= \langle\bar{\phi}(x_i),0\rangle \langle\bar{\phi}(y_i),0\rangle = 0\\
\end{align*}	

Symmetry is inherited from symmetry of $k$ and $l$. Boundedness and Lipschitz continuity are implied by application of the claims in Section \ref{supp:bounded-and-lipschitz}.

\end{proof}

\end{document}